\algnewcommand\algorithmicinput{\textbf{Input:}}
\algnewcommand\INPUT{\item[\algorithmicinput]}
	\tikzstyle{block} = [rectangle, rounded corners, minimum width=3cm, minimum height=1cm,text centered, draw=black, fill=red!30]
	\tikzstyle{new} = [rectangle, rounded corners, minimum width=1cm, minimum
	\tikzstyle{arrow} = [thick,->,>=stealth]
\DeclareFontFamily{OT1}{pzc}{}
\DeclareFontShape{OT1}{pzc}{m}{it}{<-> s * [1.200] pzcmi7t}{}
\DeclareMathAlphabet{\mathpzc}{OT1}{pzc}{m}{it}
  \DeclareMathAlphabet\PazoBB{U}{fplmbb}{m}{n}%
\newtheorem{theorem}{Theorem}
\newtheorem{lemma}{Lemma}
\newtheorem{corollary}{Corollary}
\newtheorem{remark}{Remark}
\newtheorem{problem}{Problem}
\newtheorem{definition}{Definition}
\algrenewcommand\textproc{}%
\let\oldthempfootnote\thempfootnote
\def\thempfootnote{\text{\oldthempfootnote}}
\setlist[itemize]{leftmargin=*}
\setlist[enumerate]{leftmargin=*}
\colorlet{lightgray}{green!4}
\newcommand{\GG}{\mathcal{G}}
\newcommand{\VV}{\mathcal{V}}
\newcommand{\zero}{\mathbf{0}}
\newcommand{\ones}{\mathbf{1}}
\newcommand{\ppp}{\boldsymbol\pi}
\newcommand{\AAA}{\mathbf{A}}
\newcommand{\AAAtop}{\AAA^{\hspace{-0.1cm}\top}}
\newcommand{\Acal}{\mathcal{A}}
\newcommand{\Scal}{\mathcal{S}}
\newcommand{\Ncal}{\mathcal{N}}
\newcommand{\Bcal}{\mathcal{B}}
\newcommand{\Mcal}{\mathcal{M}}
\newcommand{\Fcal}{\mathcal{F}}
\DeclareMathAlphabet\mathbfcal{OMS}{cmsy}{b}{n}
\newcommand{\Rset}{\mathbb{R}}
\newcommand{\Vcal}{\mathcal{V}}
\newcommand{\Lcal}{\mathcal{L}}
\newcommand{\fcomm}{f_\text{\tiny\faWifi}}
\newcommand{\fcomp}{f_\text{\tiny\faBalanceScale}}
\newcommand{\Gex}{\GG}
	\newcommand\DoubleLine[5][4pt]{%
	  \path(#2)--(#3)coordinate[at start](h1)coordinate[at end](h2);
	  \draw[<-,line width=0.25mm,black] ($(h1)!#1!90:(h2)$)  to ["#4"]   ($(h2)!#1!-90:(h1)$);
	  \draw[->,line width=0.25mm,black] ($(h1)!#1!-90:(h2)$) to ["#5" '] ($(h2)!#1!90:(h1)$);
	}
	\newcommand\DoubleLineOneDirectional[5][4pt]{%
	  \path(#2)--(#3)coordinate[at start](h1)coordinate[at end](h2);
	  \draw[->,line width=0.75mm,black] ($(h1)!#1!90:(h2)$)  to ["#4"]
	  ($(h2)!#1!-90:(h1)$);
	  \draw[->,very thick,densely dashed,black!50!white, bend right=15] ($(h1)!#1!-100:(h2)$) to
	  ["#5" '] ($(h2)!#1!100:(h1)$);
	}
	\newcommand\xDoubleDashedLine[5][4pt]{%
	  \path(#2)--(#3)coordinate[at start](h1)coordinate[at end](h2);
	  \draw[->,very thick,densely dashed,black!50!white,bend left=90] ($(h1)!#1!90:(h2)$)  to ["#4"]
	  ($(h2)!#1!-90:(h1)$);
	  \draw[<-,very thick,densely dashed,black!50!white, bend right=80] ($(h1)!#1!-100:(h2)$) to
	  ["#5" '] ($(h2)!#1!100:(h1)$);
	}
	\newcommand\DoubleDashedLine[5][4pt]{%
	  \path(#2)--(#3)coordinate[at start](h1)coordinate[at end](h2);
	  \draw[<-,very thick,black!50!white,densely
	  dashed,sloped,anchor=south] ($(h1)!#1!90:(h2)$)  to ["#4"]   ($(h2)!#1!-90:(h1)$);
	  \draw[->,very thick,  black!50!white,densely dashed,sloped,anchor=south] ($(h1)!#1!-90:(h2)$) to ["#5" '] ($(h2)!#1!90:(h1)$);
	}
	\newcommand\DoubleDashedLineRev[5][4pt]{%
	  \path(#2)--(#3)coordinate[at start](h1)coordinate[at end](h2);
	  \draw[<-,very thick,  black!50!white,densely dashed,sloped,anchor=south] ($(h1)!#1!-90:(h2)$) to ["#5" '] ($(h2)!#1!90:(h1)$);
	  \draw[->,very thick,black!50!white,densely dashed,sloped,anchor=south] ($(h1)!#1!90:(h2)$)  to ["#4"]   ($(h2)!#1!-90:(h1)$);
	}
\definecolor{kkGreen}{RGB}{201,232,206}
\definecolor{kkRed}{RGB}{255,196,215}
\definecolor{kkBlue}{RGB}{214,226,255}
\title{\Large \bf
  Talk Resource-Efficiently to Me:\\[0.1cm]Optimal Communication Planning for 
  Distributed Loop Closure Detection 
}
\author{Matthew
  Giamou\textsuperscript{$\dagger$}\thanks{\textsuperscript{$\dagger$}Equal
contribution.} \quad Kasra
Khosoussi\textsuperscript{$\dagger$} \quad Jonathan P. How
  \thanks{Authors are
	with the Laboratory for Information and Decision Systems (LIDS), MIT,
	Cambridge, MA --- {\tt\small \{mgiamou,kasra,jhow\}@mit.edu}}%
}
\begin{document}
\maketitle
\thispagestyle{empty}
\pagestyle{empty}

\begin{abstract}
  Due to the distributed nature of cooperative simultaneous localization and
  mapping (CSLAM), detecting inter-robot loop closures necessitates sharing
  sensory data with other robots. A na\"{\i}ve approach to data sharing can
  easily lead to a waste of mission-critical resources. This paper investigates
  the logistical aspects of CSLAM. Particularly, we
  present a general resource-efficient communication planning framework that
  takes into account both the total amount of exchanged data and the induced
  division of labor between the participating robots. Compared to
  other state-of-the-art approaches, our framework is able to
  verify the same set of potential inter-robot loop closures while exchanging
  considerably less data and influencing the induced workloads.
  We develop a fast algorithm for finding globally optimal communication
  policies, and present theoretical analysis to characterize the necessary and sufficient
  conditions under which simpler strategies are optimal. The proposed framework
  is extensively evaluated with data from the KITTI odometry benchmark datasets.
\end{abstract}

\section{Introduction}

Multi-robot, or cooperative, simultaneous localization and mapping
(CSLAM) is an active area of research with a wide spectrum of applications that span from
robotic search and rescue in challenging environments to navigating fleets of autonomous
cars; see \cite{cadena2016past,saeedi2016multiple,indelmandistributed} for 
recent surveys.  Communication is a crucial aspect of the approach, without which
CSLAM would simply reduce to decoupled copies of conventional
SLAM.
In applications
without pre-existing infrastructure, ad-hoc wireless communication is subject
to many shortcomings, including energy constraints, bandwidth, and range
limitations; see, e.g., \cite{saeedi2016multiple,paull2015communication}.
Overlooking these challenges could lead to impractical solutions.
Ensuring that agents are able to effectively and resource-efficiently communicate with one another is 
one of the most challenging problems facing distributed CSLAM
architectures \cite{saeedi2016multiple}. 

Communication is an essential prerequisite for 
establishing loop closures between different robots' trajectories
and maps. To search for
inter-robot loop closures, robots need to compare and match the data acquired throughout 
each of their individual trajectories. However, each
robot initially has access only to the data collected by its own onboard sensors. As a
result, robots need
to frequently share data among themselves. 
State-of-the-art techniques either employ a centralized architecture, or simply
require each robot to broadcast a down-sampled history of its sensory readings; see, e.g.,
\cite{forster2013collaborative} and
\cite{dong2015distributed,indelman2016incremental,indelman2014multi,lazaro2013multi}, respectively.
A na\"{\i}ve approach to the data sharing problem can easily lead to a waste
of mission-critical resources including battery, wireless bandwidth, and CPU
time.

We present a general communication planning framework for resource-efficient data exchange
in the search for inter-robot loop closures in distributed
CSLAM front-ends.
Our framework has several appealing features:

\begin{itemize}
  \item[$\diamond$] A guarantee to be \emph{lossless} in the sense that, for any given  set
	of candidate matches, the proposed framework allows for a \emph{complete
	search}
	of all inter-robot loop closures that exist within that set.
  \item[$\diamond$]  Efficient algorithms for
	finding optimal exchange policies with respect to the total amount of
	data transmission with minimal computational overhead.
  \item[$\diamond$]  Providing a mechanism through which
	one can retain communication efficiency while influencing the final induced
	division of labor between the robots. This
	allows the team to balance the resulting induced workloads based on the
	distribution of computational resources among the robots.
  \item[$\diamond$] Applicability to systems that use measurements and maps composed of any 
  	data type, including dense 3D laser scans and local image features, e.g., BRIEF \cite{calonder2010brief}.
\end{itemize}

\begin{figure*}[t]
  \centering
  \begin{subfigure}[t]{0.18\textwidth}
	\centering
	\begin{tikzpicture}[scale=0.7]
	  \tikzstyle{robot1}=[draw,circle,fill=kkGreen,very thick,scale=1.5,inner
	  sep=0.1cm,outer sep=1pt]
	  \tikzstyle{robot2}=[draw,circle,fill=kkRed,very thick,scale=1.5,inner
	  sep=0.1cm,outer sep=1pt]
	  \node[robot1] at (0,0) (r1) {\small\faAndroid};
	  \node[robot2] at (4,0) (r2) {\small\faAndroid};
	  \DoubleLineOneDirectional{r1}{r2}{Data}{$\Mcal$}
	\end{tikzpicture}
	\caption{}
	\label{fig:unidirectional}
  \end{subfigure}
  \qquad\,\,
  \begin{subfigure}[t]{0.18\textwidth}
	\centering
	\begin{tikzpicture}[scale=0.8]
	  \tikzstyle{robot}=[draw,circle,fill=kkGreen,very thick,scale=1.5,inner sep=0.1cm,outer sep=1pt]
	  \tikzstyle{broker}=[draw,rectangle, very thick,fill=kkBlue,scale=1.2,inner sep=0.25cm,outer sep=3pt]
	  \node[robot] at (0,0) (r1) {\small\faAndroid};
	  \node[robot] at (4,0) (r2) {\small\faAndroid};
	  \node[broker] at (2,2.2) (b) {{\small \faExchange}};
	  \DoubleLine{r1}{r2}{Data}{Data}
	  \DoubleDashedLine{r1}{b}{$\color{black!50}\pi^\star$}{\small$\Mcal_1$}
	  \DoubleDashedLineRev{r2}{b}{\small $\Mcal_2$}{$\color{black!50}\pi^\star$}
	\end{tikzpicture}
	\caption{}
	\label{fig:base}
  \end{subfigure}
  \qquad\,\,\,\,\,\,
  \begin{subfigure}[t]{0.18\textwidth}
	\centering
	\begin{tikzpicture}[scale=0.8]
	  \tikzstyle{robot}=[draw,circle,fill=kkGreen,very thick,scale=1.5,inner sep=0.1cm,outer sep=1pt]
	  \tikzstyle{robot1}=[draw,circle split,fill=kkBlue,very thick,scale=1.1,inner sep=0.1cm,outer sep=1pt]
	  \tikzstyle{broker}=[draw,rounded rectangle, very thick,fill=kkBlue,scale=1.2,inner sep=0.25cm,outer sep=3pt]
	  \node[robot] at (0,0) (r1) {\small\faAndroid};
	  \node[robot] at (4,0) (r2) {\small\faAndroid};
	  \node[robot1] at (2,2.2) (b) 
	  {$\text{\small\faAndroid}$\nodepart{lower}$\text{\small\faExchange}$};
	  \DoubleLine{r1}{r2}{Data}{Data}
	  \DoubleDashedLine{r1}{b}{$\color{black!50}\pi^\star$}{\small$\Mcal_1$}
	  \DoubleDashedLineRev{r2}{b}{\small $\Mcal_2$}{$\color{black!50}\pi^\star$}
	\end{tikzpicture}
	\caption{}
	\label{fig:3rdrobot}
  \end{subfigure}
  \qquad\,\,\,\,\,
  \begin{subfigure}[t]{0.18\textwidth}
	\centering
	\begin{tikzpicture}[scale=0.7]
	  \tikzstyle{robot2}=[draw,circle,fill=kkGreen,very thick,scale=1.5,inner sep=0.1cm,outer sep=1pt]
	  \tikzstyle{robot1}=[draw,circle,fill=kkGreen,very thick,scale=1.5,inner sep=0.1cm,outer sep=1pt]
	  \tikzstyle{broker}=[draw,rounded rectangle, very thick,fill=kkBlue,scale=1.2,inner sep=0.25cm,outer sep=3pt]
	  \node[robot1] at (0,0) (r1)
	  {$\underset{\text{\faExchange}}{\text{\small\faAndroid}}$};
	  \node[robot2] at (4,0) (r2) {\small\faAndroid};
	  \DoubleLine{r1}{r2}{Data}{Data}
	  \xDoubleDashedLine{r1}{r2}{$\color{black!50}\pi^\star$}{\small$\Mcal$}
	\end{tikzpicture}
	\caption{}
	\label{fig:samerobot}
  \end{subfigure}
  \caption{\small An overview of distributed sensory data exchange approaches in CSLAM.
	Figure~\ref{fig:unidirectional} illustrates a monolog (unidirectional
  policy), in which one robot sends all of its scans to the other. In some of
  the state-of-the-art techniques,
this process happens also in the opposite direction (both robots share all of
their scans with each other); see, e.g., \cite{dong2015distributed}. In addition to the sensory data, robots also need to
transmit a smaller amount of information (``metadata'' $\Mcal$) to help identify
potential loop closures (e.g., compact bag-of-words (BoW) vectors for visual place recognition or sparse trajectories). Figures~\ref{fig:base}, \ref{fig:3rdrobot}, and \ref{fig:samerobot} illustrate the proposed approach.
Contrary to Figure~\ref{fig:unidirectional}, here robots engage in a dialog, and
each shares a subset of its sensory data with the other robot. We demonstrate that this process can significantly reduce the total amount of
exchanged data. In our approach, robots still need to exchange metadata. The
broker (\faExchange) then solves the optimal data exchange problem and sends the
optimal exchange policy
$\pi^\star$ for execution to the robots.}
  \label{fig:overview}
\end{figure*}
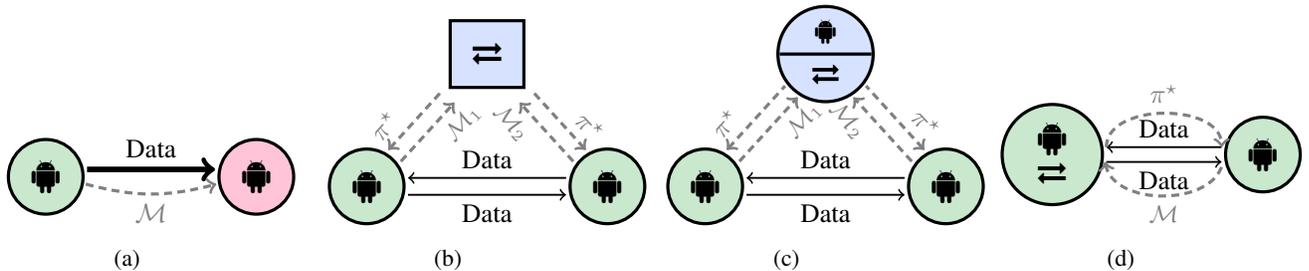

\subsection{Related Work}

In a series of papers, Indelman et al. and Dong et
al.~\cite{dong2015distributed,indelman2014multi,indelman2016incremental} develop
a pose-graph CSLAM framework based on Expectation-Maximization. 
From the perspective of
data exchange efficiency, in
\cite{dong2015distributed,indelman2014multi,indelman2016incremental} robots
broadcast a downsampled subset of their (informative) raw measurements (e.g., laser scans) with
each other. Our work can be employed alongside these and similar systems to provide an
alternative resource-efficient communication plan.

Cieslewski and Scaramuzza~\cite{cieslewski2017efficient} investigate the
scalability of decentralized visual place recognition---in terms of the amount of
exchanged data per place recognition query---in large teams of robots. In particular, they propose a
decentralized approach whose scalability is comparable to that of centralized
architectures and significantly better than the existing decentralized
approaches. In \cite{cieslewski2017efficient} it is empirically shown that their
heuristic approximation only suffers a mild reduction in place recognition recall. The core idea in \cite{cieslewski2017efficient} is to
send partial queries to every other robot, assess the returned image similarity scores, and send the full
query only to the robot with the most likely candidate match.
Unlike the online (frame-query) flavour of the problem addressed in \cite{cieslewski2017efficient},
our work considers a batch formulation that arises in occasional, but
larger, data
exchanges. The batch setting is especially well suited to applications
in which multiple robots are distributed to cover a large space, and
communication is only possible during rendezvous. In such
settings, rendezvous are seen as short-lived valuable opportunities that can be
leveraged to better achieve the mission objective. Furthermore, 
while \cite{cieslewski2017efficient} concerns fleet-wide communication
efficiency in detecting potential inter-robot loop closures, here we focus
on local (pairwise) efficiency in exchanging sensory data. Despite these
differences, an extension of our
framework to $n$-way data exchanges can use the idea behind \cite{cieslewski2017efficient} to improve the
communication efficiency of its metadata
exchange phase (see Section~\ref{sec:OSEP}).

Sharing compressed beliefs and graphs constitutes another type of information exchange that
arises in CSLAM. State-of-the-art techniques often marginalize out unnecessary
intermediate poses from the belief to reduce the amount of exchanged data; see,
e.g., \cite{cunningham2012fully,paull2015communication}. The
resulting information matrix, however, is generally dense. 
This has led to the study of approximate sparisfication
techniques to ``compress'' the reduced beliefs. 
Paull et al.~\cite{paull2015communication} investigate CSLAM with acoustic communication
in the context of autonomous underwater vehicles. They propose a consistent
(conservative) sparsification
scheme based on Kullback-Leibler divergence. 
Lazaro et al.~\cite{lazaro2013multi} propose to 
transmit a reduced representation of robots' graphs (``condensed graphs''), as
well as the most recent laser scans. Sharing only the most recent laser scans
comes at the cost of losing potential loop closures in the regions that robots
 had explored separately prior to the encounter (i.e., before establishing
a communication link).  
Cunningham et al.~\cite{cunningham2012fully} propose a fast RANSAC-based data association
scheme for CSLAM. The communication module in \cite{cunningham2012fully}
shares the reduced beliefs (``condensed maps'') with a bounded number of
robots within communication range.
In contrast to our work, \cite{cunningham2012fully} considers a feature-based
formulation with purely geometric (point) features. In that setting, each
landmark measurement consists of a pair of range and bearing values, which is
typically too lightweight 
to necessitate a communication planning framework.

Forster et
al.~\cite{forster2013collaborative} propose a centralized
framework, in which the base station aggregates all visual information and
establishes inter-robot and intra-robot loop closures; see
\cite{saeedi2016multiple} for more centralized CSLAM approaches. Centralized approaches have
limited applications and, compared to our work, leave no room for communication
efficiency.

Montijano et al.~\cite{montijano2013distributed} and Leonardos et
al.~\cite{leonardos2017distributed} explore elegant formulations and algorithms
for \emph{solving} the distributed data association problem with an emphasis on
maintaining {association consistency} across the communication graph.
Unlike \cite{montijano2013distributed,leonardos2017distributed}, our paper takes a
step back and investigates the logistics of distributed data
association through exchanging data between pairs of robots. Our
approach is orthogonal to such techniques and can be employed alongside 
distributed solvers.

In summary, our framework neither tells the agents \emph{what} to say to each
other---a question that is partly a system-dependant design choice, partly
addressed by belief compression methods, see e.g.,
\cite{paull2015communication,paull2016unified,lazaro2013multi}, and partly
addressed by measurement selection schemes, e.g., see
\cite{kasra16wafr,ila2010information}---nor does
it tell them \emph{what to do} with the exchanged data---i.e., how to solve the
data association or the resulting inference problem, which is addressed by works such as
\cite{montijano2013distributed,leonardos2017distributed,cunningham2012fully,indelman2016incremental}
among others; it rather advises them on \emph{how} to communicate more
effectively and efficiently.  
\subsection{Contribution} This paper addresses the
data exchange problem, a key prerequisite for realizing resource-efficient
distributed inter-robot loop closure detection and place recognition. We formalize the problem, provide a theoretical
analysis, and shed light on its connection to the weighed minimum bipartite
vertex cover problem. These insights ultimately lead to a fast algorithm for
finding globally optimal communication plans based on linear programming.
Additionally, we experimentally validate the proposed framework based on real
benchmark datasets.

\subsection*{Notation}
Bold lower-case and upper-case letters are reserved for vectors and matrices,
respectively.  $\ones$ and $\zero$ denote, respectively, the column vectors of
all ones and all zeros. Sets are shown by upper-case letters.  $|\Acal|$ denotes the
cardinality of set $\Acal$. 
The disjoint set union operator is denoted by $\uplus$ such that $\Acal \uplus
\Bcal = \Acal \cup \Bcal$ and implies that $\Acal \cap \Bcal = \varnothing$.
For any two vertices $u$ and $v$ in a given graph, $u \sim v$ means that there
is an edge connecting $u$ to $v$. Finally, for any set of vertices $\Scal$, $\Ncal(\Scal)$ is the
neighbourhood of $\Scal$ in the graph.

\section{Optimal Data Exchange}
\label{sec:OSEP}

\begin{figure*}[h]
  \centering
  \begin{subfigure}[t]{0.32\textwidth}
	\centering
	\begin{tikzpicture}[scale=1.4]
	  \tikzstyle{vertex}=[circle,fill,scale=0.4,draw]
	  \tikzstyle{special vertex}=[circle,fill=red,scale=0.4,draw]
	  \node[vertex] at (0,0) (a1) {};
	  \node[vertex] at (1,0) (a2) {};
	  \node[vertex] at (2,0) (a3) {};
	  \node[vertex] at (3,0) (a4) {};
	  \node[vertex] at (0,1) (b1) {};
	  \node[vertex] at (1,1) (b2) {};
	  \node[vertex] at (2,1) (b3) {};
	  \node[vertex] at (3,1) (b4) {};
	  \draw[line width=0.3mm, dashed, gray] (-.5,.5) -- (3.53,0.5);
	  \node (r1) at (-.5,0) [] {\faAndroid$_{_1}$};
	  \node (r2) at (-.5,1) [] {\faAndroid$_{_2}$};
	  \draw[thick](a1) -- (b1);
	  \draw[thick](b1) -- (a1);
	  \draw[thick](a1) -- (b2);
	  \draw[thick](a1) -- (b3);
	  \draw[thick](a1) -- (b4);
	  \draw[thick](b1) -- (a2);
	  \draw[thick](b1) -- (a3);
	  \draw[thick](b1) -- (a4);
	\end{tikzpicture}
	\caption{An exchange graph $\Gex$}
	\label{fig:gex}
  \end{subfigure}
  ~
  \begin{subfigure}[t]{0.32\textwidth}
	\centering
	\begin{tikzpicture}[scale=1.4]
	  \tikzstyle{vertex}=[circle,fill,scale=0.4,draw]
	  \tikzstyle{special vertex}=[diamond,fill=red,scale=0.4,draw=red]
	  \node[special vertex] at (0,0) (a1) {};
	  \node[vertex] at (1,0) (a2) {};
	  \node[vertex] at (2,0) (a3) {};
	  \node[vertex] at (3,0) (a4) {};
	  \node[special vertex] at (0,1) (b1) {};
	  \node[vertex] at (1,1) (b2) {};
	  \node[vertex] at (2,1) (b3) {};
	  \node[vertex] at (3,1) (b4) {};
	  \draw[line width=0.3mm, dashed, gray] (-.5,.5) -- (3.53,0.5);
	  \node (r1) at (-.5,0) [] {\faAndroid$_{_1}$};
	  \node (r2) at (-.5,1) [] {\faAndroid$_{_2}$};
	  \draw[ultra thick,->](a1) -- (b1);
	  \draw[ultra thick,->](b1) -- (a1);
	  \draw[thick,->](a1) -- (b2);
	  \draw[thick,->](a1) -- (b3);
	  \draw[thick,->](a1) -- (b4);
	  \draw[thick,->](b1) -- (a2);
	  \draw[thick,->](b1) -- (a3);
	  \draw[thick,->](b1) -- (a4);
	\end{tikzpicture}
	\caption{An admissible policy $\pi$}
	\label{fig:policy}
  \end{subfigure}
  ~
  \begin{subfigure}[t]{0.32\textwidth}
	\centering
	\begin{tikzpicture}[scale=1.4]
	  \tikzstyle{vertex}=[circle,fill,scale=0.4,draw]
	  \tikzstyle{special vertex}=[circle,fill=red,scale=0.4,draw]
	  \node[vertex] at (0,0) (a1) {};
	  \node[vertex] at (1,0) (a2) {};
	  \node[vertex] at (2,0) (a3) {};
	  \node[vertex] at (3,0) (a4) {};
	  \node[vertex] at (0,1) (b1) {};
	  \node[vertex] at (1,1) (b2) {};
	  \node[vertex] at (2,1) (b3) {};
	  \node[vertex] at (3,1) (b4) {};
	  \draw[line width=0.3mm, dashed, gray] (-.5,.5) -- (3.53,0.5);
	  \node (r1) at (-.5,0) [] {\faAndroid$_{_1}$};
	  \node (r2) at (-.5,1) [] {\faAndroid$_{_2}$};
	  \draw[ultra thick,green!50!blue!50!black](a1) -- (b1);
	  \draw[thick,green!50!black](a1) -- (b2);
	  \draw[thick,green!50!black](a1) -- (b3);
	  \draw[thick,green!50!black](a1) -- (b4);
	  \draw[thick,blue!50!black](b1) -- (a2);
	  \draw[thick,blue!50!black](b1) -- (a3);
	  \draw[thick,blue!50!black](b1) -- (a4);
	\end{tikzpicture}
	\caption{The division of labor induced by $\pi$}
	\label{fig:workload}
  \end{subfigure}
  \caption{\small A simple data exchange problem between two robots.  Each
	vertex corresponds to a robot pose, and each edge represents a potential
	loop closure between the corresponding robot poses. There is a ``scan''
	associated with each robot pose. To verify a potential inter-robot loop
	closure between two connected vertices, at least one robot needs to share
	its scan with the other robot. (a) A simple exchange graph. (b) An
	admissible exchange policy in which each robot shares the sensory data
	collected at its red vertex with the other robot. The orientation of each edge signifies the direction of
	exchange (i.e., vertex label). (c) The workload induced by $\pi$:
	\faAndroid$_{_2}$ is responsible for searching for loop closures among the
	green candidates, \faAndroid$_{_1}$ will search among the blue candidates.
  Note that the thick candidate edge will be screened by both robots.}
  \label{fig:bpgraph}
\end{figure*}
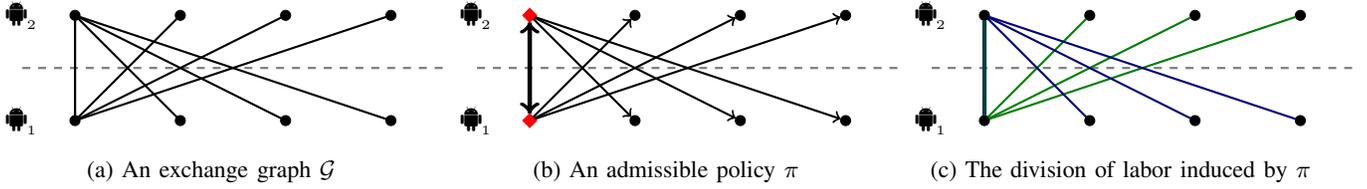
This section proposes a resource-efficient framework to facilitate the search for
inter-robot loop closures in CSLAM via exchanging collected sensory data
(collectively called ``scans'' in this paper).
Each exchange operation is moderated by an \emph{exchange
broker}, which can be a trusted hardware/software component located at one of the
two participating robots (Figure~\ref{fig:samerobot}), or a trusted third party (e.g., another robot or a
base station)---see Figures~\ref{fig:base} and \ref{fig:3rdrobot}. The broker has the duty of initiating, planning, and executing
the operation. Unlike the \emph{servers} in centralized approaches, the exchange
brokers are not
meant to aggregate scans, but rather to advise robots on the ``optimal''
exchange policy.
Although only one broker is needed per exchange process, the total number of
(potential) brokers in a team may vary between $1$ (a central broker) and the number of robots
(each robot can act as a broker if necessary), as long as the
broker is able to communicate with the two participating robots during an
exchange.
\subsection{Initiating an Exchange}
The exchange process can be initiated between two robots when they are within
communication range.
First, the broker has to form the \emph{exchange graph} $\Gex$.
\begin{definition}[Exchange Graph]
  \normalfont
  An exchange graph is an undirected bipartite graph $\Gex = (\VV_1 \uplus
  \VV_2, \Lcal)$  whose
  vertices correspond to the two
  robots' poses involved in the data exchange problem, and $\VV_1 \ni u \sim v \in \VV_2$ iff there is a ``potential'' inter-robot loop closure
  between their corresponding poses. 
\end{definition}
Without loss of generality and for convenience, we assume the degree of each
vertex in the exchange graph is at least one.
$\Lcal$ is a set of
plausible inter-robot loop closure candidates, determined based on geometry
(e.g., trajectory estimates and sensor characteristics such as field of view and
range) and/or appearance (e.g., visual
place recognition systems).
Visual place recognition techniques like the DBoW2 system of 
\cite{GalvezTRO12} can be used to form elements of $\Lcal$ using only information local 
to individual measurements. In the case of DBoW2, this information is vocabulary labels 
of BRIEF \cite{calonder2010brief} features extracted from query images. In both cases,
$\Gex$ is populated without sharing the entirety of the robots' measurement
data. Instead, $\Lcal$ is formed using a compact representation of the sensory
data (hereafter, ``metadata"), e.g., a
collection of bag-of-words (BoW) vectors. Robots cooperate with the broker (by,
e.g., providing information about their 
beliefs over their trajectories or BoW vectors) to form $\Lcal$.  In practice, a
considerable number of potential edges are not plausible given the available
information.
This often makes $\Gex$ far less dense than the complete bipartite graph. The structure of $\Gex$ ultimately 
depends on a variety of factors, including the particular sensors and perception models utilized, the 
level of uncertainty in the robots' beliefs, and perceptual
aliasing.

\subsection{Optimal Data Exchange}
\label{sec:osedef}
The optimal data exchange problem is now formally defined.

\begin{definition}[Data Exchange Policy]
  \normalfont
  A data exchange policy (DEP) is a vertex labeling that specifies which
  ``scans'' should be
  exchanged between a pair of robots. Formally, we call $\pi : \Vcal \to
  \{0,1\}$ a DEP over $\Vcal \triangleq \VV_1 \uplus \VV_2$ in which $\pi(v) = 1$ (resp., $\pi(v) = 0$)
  indicates that the scan collected at vertex $v$ should (resp., should
  \emph{not}) be sent to the other robot.
\end{definition}

Based on the above definition, $\pi$ can be \emph{executed} simply by scanning
the labels and transmitting the scans marked with ``$1$'' (i.e., to be sent); see
Algorithm~\ref{alg:exSEP}.
\begin{algorithm}[t]
  \caption{Execute a DEP}\label{alg:exSEP}
  \begin{algorithmic}[1]
	\For {$v \in \Vcal$}
	\If {$\pi(v) = 1$}
	\State Share $\Scal_v$ (corresponding sensory data).
  \EndIf
\EndFor
\end{algorithmic}
\end{algorithm}

\begin{definition}[Admissible Policy]
  \normalfont
  A DEP is
  called \emph{admissible} iff it allows for a \emph{complete} search; i.e.,
  finding all possible loop closures in $\Lcal$.
  This can be achieved iff, for each edge in the exchange graph, at least
  one robot shares its associated scan with the other robot.
  Formally, $\pi$ is admissible iff for all $u \sim v$, $\pi(u) +
  \pi(v) \geq 1$.
  \label{def:admiss}
\end{definition}

\begin{definition}[Monolog]
  \normalfont
  Let $\VV_\text{source} \in \{\VV_1,\VV_2\}$. An exchange policy $\pi : \VV \to
  \{0,1\}$ is called a
  \emph{monolog} if
  \begin{equation}
	\pi : v \mapsto \begin{cases}
	  1 & v \in \VV_\text{source}, \\
	  0 & \text{otherwise.}
	\end{cases}
	\label{}
  \end{equation}
\end{definition}
\begin{lemma}
  \normalfont
  Every monolog is admissible.
  \label{lemma:uniAdmis}
\end{lemma}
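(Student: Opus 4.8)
The plan is to unwind the two relevant definitions — Monolog and Admissible Policy — and check the admissibility inequality directly. Fix a monolog $\pi$ with source set $\VV_\text{source} \in \{\VV_1, \VV_2\}$; by symmetry assume $\VV_\text{source} = \VV_1$. I must verify that for every edge $u \sim v$ of $\Gex$ we have $\pi(u) + \pi(v) \geq 1$.

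The key observation is structural: $\Gex$ is a \emph{bipartite} graph with parts $\VV_1$ and $\VV_2$, so every edge has exactly one endpoint in $\VV_1$ and one in $\VV_2$. Hence for any edge $u \sim v$, exactly one of $u, v$ lies in $\VV_\text{source} = \VV_1$, and by the definition of a monolog that endpoint is labeled $1$ while the other is labeled $0$. Therefore $\pi(u) + \pi(v) = 1 + 0 = 1 \geq 1$, which is precisely the condition in Definition~\ref{def:admiss}. The case $\VV_\text{source} = \VV_2$ is identical with the roles of the parts swapped.

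There is really no obstacle here — the only thing to be careful about is invoking bipartiteness, which is where the argument lives: in a general (non-bipartite) graph a monolog would \emph{not} be admissible, since an edge with both endpoints outside $\VV_\text{source}$ would receive total label $0$. So the single line worth emphasizing in the write-up is that each loop-closure edge of $\Gex$ joins a vertex of $\VV_1$ to a vertex of $\VV_2$, after which the inequality is immediate. I would also note in passing that this lemma shows the admissible set is always nonempty, so the optimization problem over admissible DEPs considered in the sequel is well posed.
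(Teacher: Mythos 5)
Your proof is correct and is exactly the intended argument: the paper states this lemma without proof precisely because bipartiteness of $\Gex$ makes every edge have one endpoint in $\VV_\text{source}$, so $\pi(u)+\pi(v)=1$ on every edge. Your added remarks (that bipartiteness is the essential hypothesis, and that the lemma guarantees the feasible set of the optimization is nonempty) are accurate and consistent with the paper.
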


The broker can guarantee the completeness of search by proposing an admissible
policy---but which one of them? Two primitive objectives are considered in this
work:

\noindent 1) \underline{Communication}:
The first objective quantifies the communication cost incurred during the
execution of an exchange policy---mainly due to bandwidth and energy
consumption. The communication cost is measured by the total amount of exchanged
data. From this perspective, $\pi$ is preferred over $\pi^\prime$ iff it
can conduct a complete search by exchanging less data between the two robots.
More precisely, let $w_s: \Vcal \to \mathbb{R}_{\geq 0}$ be a weight function
defined over $\VV$ 
such that $w_s(v)$ quantifies the ``size'' of scan $\Scal_v$ collected at the
corresponding pose. Then, the communication cost incurred as a result of
executing policy $\pi$ can be modelled as
\begin{align}
  \normalfont
  \fcomm(\pi) \triangleq  
  \sum_{v \in \Vcal}  w_s(v)\,\pi(v).
\end{align}
In the special case of uniform weights,
$\fcomm$ reflects the number of exchanges made by $\pi$ (up to a constant).

\noindent 2) \underline{Induced Division of Labor}:
Upon executing an exchange policy, each robot has to perform sensor
registration on a subset of $\Lcal$. The exchange policy implicitly determines
the distribution of the workload between the robots.
The second objective captures this induced
workload. To quantify this workload, first note that any admissible policy $\pi$
divides the initial candidate set into $\Lcal = \Lcal^\pi_1 \cup \Lcal^\pi_2$ in
which $\Lcal^\pi_1$ (resp., $\Lcal^\pi_2$) is the set of edges incident to
$\VV_2$ (resp., $\VV_1$) at a vertex $v$ such that $\pi(v) = 1$. 
These sets can be empty (monolog) and are not necessarily
disjoint: $\Lcal_{12}^\pi \triangleq
\Lcal_1^\pi \cap \Lcal_2^\pi$ is the set of edges like $\{u,v\} \in \Lcal$ such that
$\pi(u)=\pi(v) = 1$ (see Figure~\ref{fig:policy}). 
$\Lcal_1^\pi\setminus\Lcal_2^\pi$ (resp.,
$\Lcal_2^\pi\setminus\Lcal_1^\pi$) can only be searched by the first (resp.,
second) robot. On the contrary, in principle both robots can screen
the candidates in $\Lcal_{12}^\pi$. We can either divide the burden of searching in
$\Lcal_{12}^\pi$ between the robots, or simply let each robot screen it on its
own. The latter is preferred due to the following advantages. First, from a robustness
perspective, verifying $\Lcal_{12}^\pi$ separately on each robot creates a
desirable redundancy in case robots are unable to exchange their newly
discovered loop closures due to problems like communication failure. Furthermore, the cost of post-exchange communication
will be slightly reduced since we do not need to
exchange the loop closures found in $\Lcal_{12}^\pi$
(Section~\ref{sec:postexchange}). Finally, as we will see
shortly, this choice leads to tractable
optimization problems. 

Suppose the computational cost of verifying candidate inter-robot loop
closure $\{u,v\}$ is quantified by $c_{uv} \geq 0$.
The total computational cost due to sensor registration induced by
exchange policy $\pi$ on robot $i \in \{1,2\}$ 
is given by
\begin{align}
  \ell_i^\pi & = 
  \sum_{\mathclap{v \in \VV \setminus \VV_i}} \,\,\,\,\sum_{u \sim v} c_{uv} \pi(v).
  \label{eq:ell}
\end{align}
Note that under uniform $\{c_{uv}\}_{u \sim v}$, $\ell_i^\pi$ is proportional to the
number of potential loop closures that must be verified by robot $i$ as a result
of exchange policy $\pi$.
Let $\alpha_1$ and $\alpha_2$ be non-negative parameters that
control the induced workload balance between the two
robots, such that, e.g., increasing $\alpha_i$ will shift the balance in favor of robot
$i$. For example, in a heterogeneous data exchange between a typical robot and a tactical
supercomputer, one may seek to choose an
admissible policy such that most of the induced workload is redirected toward the 
tactical supercomputer. This narrative results in
\begin{align}
  \fcomp(\pi;\alpha_1,\alpha_2)
  & \triangleq \alpha_1 \ell_1^\pi + \alpha_2 \ell_2^\pi  \\ & =  \sum_{v\in\VV}
  w_\ell(v) \, \pi(v),
  \label{eq:fcomp}
\end{align}
in which
\begin{equation}
  w_\ell: v \mapsto
  \begin{cases}
	\alpha_2 \sum_{u \sim v} c_{uv} & v \in \VV_1,\\ 
	\alpha_1 \sum_{u \sim v} c_{uv} & v \in \VV_2.
  \end{cases}
  \label{eq:wlw}
\end{equation}

\begin{problem}[Optimal Data Exchange Problems (ODEP)]
  \normalfont
  \begin{equation}
	\normalfont
	\begin{aligned}
	  & \underset{\pi}{\text{minimize}}
	  & & \fcomp(\pi;\alpha_1,\alpha_2) \\
	  & \text{subject to}
	  && \text{$\pi$ is admissible.}
	\end{aligned}
	\label{}
	\tag{P$_1$}
  \end{equation}
  \rule{\linewidth}{1pt}
  \begin{equation}
	\normalfont
	\begin{aligned}
	  & \underset{\pi}{\text{minimize}}
	  & & \fcomm(\pi) \\
	  & \text{subject to}
	  && \text{$\pi$ is admissible.}
	\end{aligned}
	\label{}
	\tag{P$_2$}
  \end{equation}
  \rule{\linewidth}{1pt}
  \begin{equation}
	\normalfont
	\begin{aligned}
	  & \underset{\pi}{\text{minimize}}
	  & & f_\bullet(\pi;\alpha_1,\alpha_2,\omega) \\
	  & \text{subject to}
	  && \text{$\pi$ is admissible.}
	\end{aligned}
	\label{}
	\tag{P$_3$}
  \end{equation}
  \begin{align}
	f_\bullet(\pi;\alpha_1,\alpha_2,\omega) & \triangleq \fcomm(\pi) +
	\omega \fcomp(\pi;\alpha_1,\alpha_2) \\
	& = \sum_{v\in\VV} w_\bullet(v) \, \pi(v),
	\label{}
  \end{align}
  in which
  $w_\bullet : v \mapsto  w_s(v) + \omega \, w_\ell(v)$.
  \label{prob:osep}
\end{problem}

\subsection{Solving the Optimal Data Exchange Problem}

It is easy to see that
	  P$_{1:3}$ are all instances of the weighted minimum bipartite vertex cover
	  problem.\footnote{Finding a subset of vertices in a vertex-weighted bipartite
	  graph with the minimum sum of vertex weights such that it covers
	every edge.}
  To see this, first note that the admissibility constraint needed for guaranteeing the
  completeness of search is identical to the constraint
  in vertex cover.
  Translating an instance of one of these narratives to an equivalent instance of
  the other (i.e., mapping a lossless exchange policy to an equivalent vertex cover $\pi
  \mapsto \Pi$ and vice versa) is trivial:
	$\Pi \triangleq \big\{v \in \Vcal : \pi(v) =
	1\big\}$ and $\pi : v \mapsto \mathds{1}_{\Pi}(v)$ where
  \begin{align}
	\mathds{1}_{\Pi}(v) \triangleq
   \begin{cases}
	 1 & \text{if $v \in \Pi$,} \\ 
	 0 & \text{if $v \in \VV\setminus\Pi$.}
   \end{cases}
   \label{}
 \end{align}
 Finally, note that the cost of $\pi$ (in ODEP) is equal to that of 
 $\Pi$ in the weighted minimum bipartite vertex cover, and vice versa.
Consequently P$_{1:3}$ can all be solved
using the same machinery. Furthermore, this result characterizes 
the communication cost incurred in the
search for inter-robot loop closures and the induced workload balance in terms
of the graph topology and
vertex/edge weights through a well-understood graph
invariant. 

\subsection*{Algorithm}
Although the weighted minimum vertex cover problem is NP-hard in general,
it can be solved efficiently in bipartite graphs; see, e.g.,
\cite{schrijver2003combinatorial}.  Therefore, by virtue of
the abovementioned observation, we can solve any ODEP efficiently by
casting it as a weighted minimum bipartite vertex cover problem.
Moreover, Algorithm~\ref{alg:exSEP} can be slightly
restructured to execute the vertex cover translation of an optimal
policy---see Algorithm~\ref{alg:exVC}.
It remains to describe an algorithm based on linear programming (LP) for efficiently
solving ODEP.
Let $w \in \{w_\ell, w_s, w_\bullet\}$.
The corresponding ODEP can then be formulated as the following integer linear program (ILP):
\begin{equation}
  \begin{aligned}
	& \underset{\pi}{\text{minimize}}
	& & \sum_{v \in \Vcal} w(v)\,\pi(v)\\
	& \text{subject to}
	&& \pi(u) + \pi(v) \geq 1 \qquad u \sim v, \\
	&&& \pi(u) \in \{0,1\} \qquad\,\,\,\,\,\,\,\,\, u \in \Vcal.
  \end{aligned}
  \label{prob:ilp}
  \tag{P$_\text{ILP}$}
\end{equation}
The admissibility constraint in~\ref{prob:ilp} can be compactly written as
$\AAAtop\ppp \geq \ones$, in which $\AAA$ is the unoriented
incidence matrix of the exchange graph, and $\ppp$ is the stacked vector
of values $\pi(u)$ for $u \in \Vcal$. Let $\mathbf{w}$ be the stacked vector of
vertex weights.~\ref{prob:ilp} admits a natural LP relaxation by expanding its
feasible set $\Fcal_\text{ILP}$ into $\Fcal_\text{LP} \triangleq \{\ppp :
  \AAAtop\ppp \geq
\ones, \ppp\geq\zero\} \supset \Fcal_\text{ILP}$:
\begin{equation}
  \begin{aligned}
	& \underset{\ppp}{\text{minimize}}
	& & \mathbf{w}^\top{\hspace{-0.05cm}}\ppp\\
	& \text{subject to}
	&& \ppp \in \Fcal_\text{LP}.
  \end{aligned}
  \label{prob:lp}
  \tag{P$_\text{LP}$}
\end{equation}
 It is well known that $\AAA$ is \emph{totally unimodular}, and therefore
 $\Fcal_\text{LP}$ is integral; i.e., \ref{prob:lp} has an
 integral solution that can be found using the simplex algorithm (see, e.g.,
 \cite[Ch.~18]{schrijver2003combinatorial}). Any integral solution corresponds
 to an optimal exchange policy for Problem~\ref{prob:osep}. 
 In the special case of uniform weights, we can construct the optimal policy
 directly from the maximum bipartite matching in $\Gex$; see 
 K\"{o}nig's theorem \cite{schrijver2003combinatorial}.

\subsection*{Optimality Conditions for Monologs}
ODEP is built on the presumption that exploiting bidirectional
communication can lead to more resource-efficient strategies. 
While this is generally true, in some special cases, monologs may perform
optimally as well. The following theorem offers the necessary and sufficient condition for the most general
form of P$_{1:3}$ under which a monolog is optimal.

\begin{theorem}
  \normalfont
  Consider a vertex-weighted exchange graph $\Gex$ with non-negative weights
  assigned by $w : \VV \to
  \Rset_{\geq 0}$. Let $\VV_\circ \in \{\VV_1,\VV_2\}$.
  The monolog $\pi$ defined as
  \begin{align}
	\pi_\circ : v \mapsto \begin{cases}
	  1 & v \in \VV_\circ,\\
	  0 & \text{otherwise},
	\end{cases}
	\label{}
  \end{align}
  minimizes the cost function
  \begin{align}
	f(\pi) \triangleq \sum_{v \in \VV} w(v) \, \pi(v)
	\label{}
  \end{align}
  among all admissible policies if and only if $\Gex$ satisfies what we call 
  the \emph{generalized Hall's condition} (GHC): 
  \begin{equation}
	\forall \Scal\subseteq \VV_\circ: \qquad \sum_{v\in\Scal} w(v) \leq
	\sum_{\mathclap{v \in \Ncal(\Scal)}} w(v).
	\tag{GHC}
  \end{equation}
  \label{th:generalized}
\end{theorem}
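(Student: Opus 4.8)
The plan is to argue entirely in the vertex-cover reformulation established immediately above the theorem: a policy $\pi$ is admissible iff $\Pi \triangleq \{v : \pi(v) = 1\}$ is a vertex cover of $\Gex$, and $f(\pi)$ equals $w(\Pi) \triangleq \sum_{v\in\Pi} w(v)$. Since $\Gex$ is bipartite and $\VV_\circ \in \{\VV_1,\VV_2\}$ is one entire side, $\VV_\circ$ is itself a vertex cover, so $\pi_\circ$ is admissible (this is Lemma~\ref{lemma:uniAdmis}); optimality of $\pi_\circ$ is therefore exactly the assertion that $w(\VV_\circ) \leq w(\Pi)$ for every vertex cover $\Pi$ of $\Gex$. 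I will prove the two directions by explicit constructions, with no appeal to LP duality.

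For the ``if'' direction, assume GHC and fix an arbitrary vertex cover $\Pi$. Put $\Scal \triangleq \VV_\circ \setminus \Pi$. Every edge with an endpoint in $\Scal$ has its other endpoint --- which lies in $\VV\setminus\VV_\circ$ by bipartiteness --- in $\Pi$, so $\Ncal(\Scal) \subseteq \Pi \cap (\VV\setminus\VV_\circ)$. As $\Pi\cap\VV_\circ$ and $\Ncal(\Scal)$ are disjoint subsets of $\Pi$ and the weights are non-negative, $w(\Pi) \geq w(\Pi\cap\VV_\circ) + w(\Ncal(\Scal)) \geq w(\Pi\cap\VV_\circ) + w(\Scal) = w(\Pi\cap\VV_\circ) + w(\VV_\circ\setminus\Pi) = w(\VV_\circ)$, where the middle inequality is GHC applied to $\Scal$. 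Hence $\pi_\circ$ attains the minimum.

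For the ``only if'' direction I argue the contrapositive: suppose GHC fails, so there is some $\Scal \subseteq \VV_\circ$ with $\sum_{v\in\Scal} w(v) > \sum_{v\in\Ncal(\Scal)} w(v)$. I exhibit a strictly cheaper admissible policy by swapping $\Scal$ for its neighbourhood: let $\Pi' \triangleq (\VV_\circ\setminus\Scal) \cup \Ncal(\Scal)$. This is a vertex cover, since any edge not incident to $\VV_\circ\setminus\Scal$ has its $\VV_\circ$-endpoint in $\Scal$ and hence its other endpoint in $\Ncal(\Scal)$. Because $\Ncal(\Scal)\cap\VV_\circ = \varnothing$ (bipartiteness again), $w(\Pi') = w(\VV_\circ) - w(\Scal) + w(\Ncal(\Scal)) < w(\VV_\circ)$, so $\pi_\circ$ is not optimal.

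I do not expect a genuine obstacle; the content is set-theoretic bookkeeping, and the one point requiring care is the repeated use of bipartiteness to guarantee $\Ncal(\Scal)\cap\VV_\circ = \varnothing$, which is what makes the weight decompositions above sums over disjoint sets. One minor edge case deserves a sentence: the standing assumption that every vertex has degree at least one is not actually needed here --- for an isolated $v\in\VV_\circ$, GHC applied to $\{v\}$ forces $w(v)=0$, which is consistent with $\pi_\circ$ remaining optimal. Finally, I would append a remark that, specialised to unit weights, GHC is precisely Hall's condition for a matching saturating $\VV_\circ$, so the theorem recovers the classical statement (via K\"onig's theorem) that $|\VV_\circ|$ is the minimum vertex cover exactly when such a matching exists --- which motivates the name ``generalized Hall's condition.''
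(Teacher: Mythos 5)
Your proof is correct and follows essentially the same route as the paper's: the necessity direction uses the identical swap construction $(\VV_\circ\setminus\Scal)\uplus\Ncal(\Scal)$, and the sufficiency direction rests on the same key inclusion $\Ncal(\VV_\circ\setminus\Pi)\subseteq\Pi\setminus\VV_\circ$ followed by GHC. The only (cosmetic) difference is that you bound the weight of an \emph{arbitrary} vertex cover rather than an optimal one, which lets you skip the paper's case analysis on whether $\Pi^\star_1$ or $\Pi^\star_2$ is empty.
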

\begin{proof}
  \normalfont
  \noindent [$\Rightarrow$]
  We show the contrapositive.
  Suppose there exists a $\Scal \subseteq \VV_\circ$ that violates 
  GHC. Consider,
  \begin{align}
	{\pi}^\ast : v \mapsto 
	\begin{cases}
	  1 & v \in (\VV_\circ \setminus \Scal) \uplus \Ncal(\Scal) \\
	  0 & \text{otherwise}.
	\end{cases}
	\label{}
  \end{align}
  $\pi^\ast$ is admissible since the vertices in $\VV_\circ \setminus \Scal$ cover the edges
  that are not incident to $\Scal$, while those in $\Ncal(\Scal)$ cover every edge incident to
  $\Scal$. Now since $\Scal$ violates GHC we have,
  \begin{align}
	f(\pi^\ast) & = \sum_{\mathclap{v \in \VV_\circ \setminus \Scal}} w(v) +
	\sum_{\mathclap{v\in\Ncal(\Scal)}} w(v) \\ 
	& < \sum_{\mathclap{v \in \VV_\circ \setminus \Scal}} w(v) +
	\sum_{\mathclap{v\in\Scal}} w(v) \\
	& = \sum_{v \in \VV_\circ} w(v) \\
	& = f(\pi_\circ).
	\label{}
  \end{align}
  \noindent [$\Leftarrow$] Now we show GHC is sufficient. Suppose GHC holds and
  let $\pi^\star$ be the optimal admissible policy.
  For simplicity and without loss of generality let us assume $\VV_\circ = \VV_1$. 
  Define $\Pi^\star \triangleq
  \{v \in \Vcal : \pi^\star(v) = 1\}$ and $\Pi^\star_{i} \triangleq \Pi^\star \cap
  \VV_i$ ($i=1,2$). 
  If $\Pi^\star_2$ is empty, $\pi^\star = \pi_\circ$. Furthermore, based on GHC, $\Pi^\star_1$
  cannot be empty unless $\VV_1$ and $\VV_2$ have equal costs, which also
  implies that $\pi^\star = \pi_\circ$.
  Thus we can assume both are non-empty.
  Since $\pi^\star$ is admissible, there must be no edges between
  $\VV_1 \setminus \Pi^\star_1$ and $\VV_2 \setminus \Pi^\star_2$. Therefore, 
  $\Ncal(\VV_1\setminus\Pi_{1}^\star) \subseteq \Pi^\star_2$.
  From GHC and the fact that vertex weights are non-negative we have,
  \begin{align}
	\sum_{\mathclap{v \in \VV_1\setminus\Pi_1^\star}} w(v) & \leq 
	\sum_{\mathclap{v \in \Ncal(\VV_1\setminus\Pi_1^\star)}} w(v)  \\
	& \leq 
	\sum_{\mathclap{v \in \Pi^\star_2}} w(v).
	\label{}
  \end{align}
  Consequently,
  \begin{align}
	f(\pi^\star) & =  \sum_{\mathclap{v \in \Pi^\star_1}}w(v) +
	\sum_{\mathclap{v \in \Pi^\star_2}} w(v) \\
	& \geq  \sum_{\mathclap{v \in \Pi^\star_1}} w(v) +
	\sum_{\mathclap{\VV_1\setminus\Pi^\star_1}} w(v) \\
	& =  \sum_{\mathclap{v\in \VV_\circ}}w(v) \\
	& = f(\pi_\circ).
	\label{}
  \end{align}
  This concludes the proof.
\end{proof}

\begin{remark}
  \normalfont
  Theorem~\ref{th:generalized} states that 
  $\pi_\circ$ is optimal iff, for any subset of
  vertices in $\VV_\circ$, the amount of data that needs to be transmitted from
  $\VV_\circ$ to the other robot is not
  greater than the amount of data needs to be transmitted in the opposite
  direction. Although this
  result is intuitive, the fact that 
  the GHC is both necessary and sufficient is non-trivial.
\end{remark}

\begin{corollary}
  \normalfont
  Let $\VV_\text{max} \in \{\VV_1,\VV_2\}$ be the vertex set with the larger
  $\alpha_i$.
  The monolog $\pi_1$ defined as
  \begin{align}
	\pi_1 : v \mapsto \begin{cases}
	  1 & v \in \VV_{\text{max}} , \\
	  0 & \text{\normalfont otherwise.}
	\end{cases}
	\label{}
  \end{align}
  is optimal with respect to {\normalfont P$_1$}.
  \label{prop:p1}
\end{corollary}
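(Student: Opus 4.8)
The plan is to obtain this immediately from Theorem~\ref{th:generalized}. Recall that \textnormal{P}$_1$ is the problem of minimizing $\fcomp(\pi;\alpha_1,\alpha_2) = \sum_{v\in\VV} w_\ell(v)\,\pi(v)$ over admissible policies, where $w_\ell$ is the non-negative vertex weight of~\eqref{eq:wlw}. Applying Theorem~\ref{th:generalized} with $w \triangleq w_\ell$ and $\VV_\circ \triangleq \VV_\text{max}$, the monolog $\pi_1 = \pi_\circ$ is optimal for \textnormal{P}$_1$ if and only if $\Gex$ satisfies the generalized Hall's condition (GHC) for this choice of weights and source side. So the whole proof reduces to checking GHC.

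Without loss of generality assume $\VV_\text{max} = \VV_1$, i.e., $\alpha_1 \geq \alpha_2$ (if $\alpha_1 = \alpha_2$ the choice of $\VV_\text{max}$ is immaterial, as will be clear). For a vertex set $\Scal$ contained in a single side of the bipartition, write $C(\Scal) \triangleq \sum_{v\in\Scal}\sum_{u\sim v} c_{uv}$. The one structural fact needed is the monotonicity $C(\Scal) \leq C(\Ncal(\Scal))$ for $\Scal \subseteq \VV_1$: since $\Gex$ is bipartite and $\Scal$ lies in $\VV_1$, the pairs $(v,u)$ with $v \in \Scal$ and $u \sim v$ are in bijection with the edges incident to $\Scal$ (each such edge has its $\VV_1$-endpoint in $\Scal$ and its $\VV_2$-endpoint in $\Ncal(\Scal)$), and likewise $C(\Ncal(\Scal))$ sums $c_{uv}$ over all edges incident to $\Ncal(\Scal)$; every edge incident to $\Scal$ is incident to $\Ncal(\Scal)$, and all $c_{uv}\geq 0$, so the sum over the larger edge set dominates.

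With this, GHC for $w_\ell$ and $\VV_\circ = \VV_1$ follows by a one-line chain: for every $\Scal \subseteq \VV_1$, using~\eqref{eq:wlw} on the $\VV_1$-side (weight factor $\alpha_2$) and on the $\VV_2$-side (weight factor $\alpha_1$),
\[
  \sum_{v\in\Scal} w_\ell(v) \;=\; \alpha_2\,C(\Scal) \;\leq\; \alpha_1\,C(\Scal) \;\leq\; \alpha_1\,C(\Ncal(\Scal)) \;=\; \sum_{v\in\Ncal(\Scal)} w_\ell(v),
\]
where the first inequality uses $\alpha_2 \leq \alpha_1$ together with $C(\Scal)\geq 0$, and the second is the monotonicity above. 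Hence GHC holds, and Theorem~\ref{th:generalized} gives that $\pi_1$ minimizes $\fcomp(\cdot\,;\alpha_1,\alpha_2)$ among all admissible policies, which is exactly the claim.

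There is no genuinely hard step here. The only points requiring care are the bookkeeping in the monotonicity of $C$ --- which is precisely where bipartiteness of $\Gex$ and the fact that $\Scal$ sits on one side are used, so that $\sum_{v\in\Scal}\sum_{u\sim v} c_{uv}$ counts each incident edge exactly once --- and observing that the tie $\alpha_1 = \alpha_2$ causes no difficulty, since then $\alpha_2\,C(\Scal) \leq \alpha_1\,C(\Ncal(\Scal))$ still holds for either labeling of $\VV_\text{max}$.
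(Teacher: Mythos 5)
Your proof is correct and follows exactly the route the paper intends: the corollary is stated without proof immediately after Theorem~\ref{th:generalized}, and the intended argument is precisely to verify GHC for the weights $w_\ell$ of~\eqref{eq:wlw}, which your chain $\alpha_2 C(\Scal) \le \alpha_1 C(\Scal) \le \alpha_1 C(\Ncal(\Scal))$ does cleanly (the bipartite bookkeeping for $C$ and the handling of the tie $\alpha_1=\alpha_2$ are both right). No gaps.
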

Corollary~\ref{prop:p1} implies that P$_1$ always has a trivial optimal monolog
solution. Nonetheless note that P$_3$ still allows us influence the induced division of
labor based while retaining communication efficiency.
Moreover, Corollary~\ref{prop:p1} also implies that the two
objective functions $\fcomp$ and $\fcomm$ blended together in P$_3$ are
competing with
each other to shift the structure of the optimal policy towards monologs (ideal workload balance)
and dialogs (communication efficiency), respectively.
\begin{corollary}
  \normalfont
  Let $\VV_\text{min} \in \{\VV_1,\VV_2\}$ be the vertex set with smaller
  cardinality.
  The monolog $\pi_2$ defined as
  \begin{align}
	\pi_2: v \mapsto 
	\begin{cases}
	  1 & v \in  \VV_\text{min},\\
	  0 & \text{otherwise,}
	\end{cases}
	\label{}
  \end{align}
  is optimal with respect to P$_2$ under uniform weights iff $\Gex$ satisfies
  Hall's condition (HC): 
	$\forall \Scal\subseteq \VV_\text{min}: |\Scal| \leq |\Ncal(\Scal)|$.
  \label{prop:perfectmatch}
\end{corollary}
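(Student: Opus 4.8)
The plan is to obtain Corollary~\ref{prop:perfectmatch} as a direct specialization of Theorem~\ref{th:generalized}. Under uniform weights we may normalize the scan-size function to $w \equiv 1$ without loss of generality, so that the objective of P$_2$ satisfies $\fcomm(\pi) = \sum_{v\in\VV} \pi(v)$, which is exactly the generic cost $f(\pi) = \sum_{v\in\VV} w(v)\,\pi(v)$ appearing in Theorem~\ref{th:generalized} for the constant weight function. Moreover, the monolog $\pi_2$ in the corollary is precisely the monolog $\pi_\circ$ of Theorem~\ref{th:generalized} with source set $\VV_\circ = \VV_\text{min}$. Hence Theorem~\ref{th:generalized} applies verbatim and yields: $\pi_2$ minimizes $\fcomm$ among all admissible policies if and only if $\Gex$ satisfies GHC with $\VV_\circ = \VV_\text{min}$ and $w\equiv 1$.

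The remaining step is purely bookkeeping: for the constant weight function one has $\sum_{v\in\Scal} w(v) = |\Scal|$ and $\sum_{v\in\Ncal(\Scal)} w(v) = |\Ncal(\Scal)|$ for every $\Scal\subseteq\VV_\text{min}$, so GHC collapses to the inequality $|\Scal| \leq |\Ncal(\Scal)|$ for all $\Scal\subseteq\VV_\text{min}$, which is exactly HC. Combining the two steps gives the claimed equivalence.

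The only subtlety worth a sentence is why the statement singles out $\VV_\text{min}$ rather than an arbitrary side. By Theorem~\ref{th:generalized}, the monolog transmitting from $\VV_\text{max}$ can be optimal only if GHC/HC holds with $\VV_\circ = \VV_\text{max}$; taking $\Scal = \VV_\text{max}$ there forces $|\VV_\text{max}| \leq |\Ncal(\VV_\text{max})| \leq |\VV_\text{min}| \leq |\VV_\text{max}|$, i.e.\ $|\VV_1| = |\VV_2|$. So outside this degenerate tie only the monolog on $\VV_\text{min}$ can be optimal, and phrasing the result in terms of $\VV_\text{min}$ loses nothing even in the tie (HC on $\VV_\text{min}$ then coincides with HC on $\VV_\text{max}$). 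One may also note the equivalent classical reading: under uniform weights the minimum bipartite vertex cover has size equal to the maximum matching $\nu(\Gex)$ (K\"{o}nig's theorem), Hall's marriage theorem says HC on $\VV_\text{min}$ holds iff $\Gex$ admits a matching saturating $\VV_\text{min}$, i.e.\ iff $\nu(\Gex) = |\VV_\text{min}| = \fcomm(\pi_2)$, and this is precisely the condition for $\pi_2$ to attain the vertex-cover optimum.

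There is essentially no hard step here; the proof is a one-line invocation of Theorem~\ref{th:generalized} plus the trivial reduction GHC $\Rightarrow$ HC. The only place requiring a modicum of care is making explicit that ``optimal w.r.t.\ P$_2$'' means optimal among \emph{admissible} policies (so that Theorem~\ref{th:generalized} is the right tool) and clarifying the asymmetry between $\VV_\text{min}$ and $\VV_\text{max}$ as above.
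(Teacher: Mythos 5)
Your proof is correct and matches the paper's intended route: the corollary is exactly Theorem~\ref{th:generalized} specialized to $w\equiv 1$ and $\VV_\circ=\VV_\text{min}$, under which GHC literally becomes HC, and your closing observation via K\"{o}nig's theorem and Hall's marriage theorem is the alternative derivation the paper itself mentions. The extra remark explaining why $\VV_\text{min}$ (rather than $\VV_\text{max}$) is the right source set is a nice clarification the paper leaves implicit.
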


Corollary~\ref{prop:perfectmatch} states the necessary and sufficient
condition under which the monolog $\pi_2$ is optimal.
This result also follows directly from Hall's marriage theorem and
K\"{o}nig's theorem \cite{schrijver2003combinatorial}.
As an example, consider the case of \mbox{$k$-regular} bipartite
graphs.\footnote{A graph is called $k$-regular if all of its
vertices have degree $k \geq 1$.}
A well-known application of Hall's marriage theorem implies that
$k$-regular bipartite graphs satisfy HC \cite{schrijver2003combinatorial}.
Similarly, it is easy to check that HC holds in the complete bipartite graph.
Corollary~\ref{cor:kreg} follows from this result and Corollary~\ref{prop:perfectmatch}.
\begin{corollary}
  \normalfont
  The monolog $\pi_2$ is optimal with respect to P$_2$ under uniform weights
  in  $k$-regular, and in complete bipartite graphs.
  \label{cor:kreg}
\end{corollary}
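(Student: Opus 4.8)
The plan is to obtain Corollary~\ref{cor:kreg} as an immediate consequence of Corollary~\ref{prop:perfectmatch}, which already equates optimality of the monolog $\pi_2$ (under uniform weights, for P$_2$) with Hall's condition (HC) on the smaller side $\VV_\text{min}$. Hence no new optimization argument is needed: it suffices to verify that both $k$-regular bipartite graphs and complete bipartite graphs satisfy HC, and then invoke Corollary~\ref{prop:perfectmatch}.

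For the $k$-regular case I would argue by a standard double count of edges. First record that $k|\VV_1| = |\Lcal| = k|\VV_2|$, so $|\VV_1| = |\VV_2|$ and the choice of $\VV_\text{min}$ is immaterial. Fix an arbitrary $\Scal \subseteq \VV_\text{min}$. The empty set is trivial, and for nonempty $\Scal$ the neighbourhood $\Ncal(\Scal)$ is nonempty by the standing assumption that every vertex has degree at least one. Counting the edges incident to $\Scal$ from the $\Scal$ side gives exactly $k|\Scal|$ edges; each such edge has its other endpoint in $\Ncal(\Scal)$, and each vertex of $\Ncal(\Scal)$ is incident to exactly $k$ edges, so there are at most $k|\Ncal(\Scal)|$ edges incident to $\Scal$. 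Therefore $k|\Scal| \le k|\Ncal(\Scal)|$, i.e., $|\Scal| \le |\Ncal(\Scal)|$, which is precisely HC.

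For the complete bipartite graph $\Gex = (\VV_1 \uplus \VV_2, \Lcal)$ with $\Lcal$ the full edge set, I would observe that for any nonempty $\Scal \subseteq \VV_\text{min}$ one has $\Ncal(\Scal) = \VV \setminus \VV_\text{min}$, whose cardinality is at least $|\VV_\text{min}| \ge |\Scal|$ by definition of $\VV_\text{min}$; the empty subset satisfies HC vacuously. Thus HC holds, and Corollary~\ref{prop:perfectmatch} yields the claim in both cases.

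I do not anticipate a genuine obstacle. The only points requiring a little care are (i) noting that $k$-regularity forces $|\VV_1| = |\VV_2|$ so that ``the smaller side'' is well posed, and (ii) handling the degenerate empty-subset case and invoking the standing minimum-degree assumption to guarantee $\Ncal(\Scal) \neq \varnothing$ when $\Scal \neq \varnothing$; everything else is routine.
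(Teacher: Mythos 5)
Your proposal is correct and follows essentially the same route as the paper: the paper likewise derives Corollary~\ref{cor:kreg} by checking that $k$-regular and complete bipartite graphs satisfy HC (citing the standard Hall's-marriage-theorem fact for the regular case) and then invoking Corollary~\ref{prop:perfectmatch}. Your write-up merely makes explicit the double-counting argument and the degenerate cases that the paper leaves to the reader.
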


\begin{algorithm}[t]
  \caption{Execute a DEP via Vertex Cover $\Pi$}\label{alg:exVC}
  \begin{algorithmic}[1]
	\For {$v \in \Pi$}
	\State Send $\Scal_v$ to the other robot.
  \EndFor
\end{algorithmic}
\end{algorithm}

\subsection{Post-Exchange Protocol}
\label{sec:postexchange}
After executing the optimal policy
$\pi^\star$, each robot has to verify the potential loop closures in a subset of $\Lcal$
($\Lcal_1^\pi$ and $\Lcal_2^\pi$; see Section~\ref{sec:osedef}) via sensor
registration. Examining the candidates will lead to a set of inter-robot loop
closures $\Lcal^\boxplus \subseteq \Lcal$. Because of the admissibility
constraint, we know that $\Lcal_1^\boxplus \cup \Lcal_2^\boxplus =
\Lcal^\boxplus$ in which $\Lcal^\boxplus_i$ is the set of loop closures
discovered by robot $i \in \{1,2\}$ after executing an admissible exchange
policy (i.e., the search is guaranteed to be complete). At this point, each
robot is aware of its own set of newly discovered inter-robot loop closures;
these sets will have a non-empty overlap iff $\Lcal_{12}^{\pi^\star} \cap
\Lcal^\boxplus$ is non-empty. If the communication channel is still available,
robots can immediately share their newly discovered positive matches with each
other by transmitting $\Lcal^\boxplus_i\setminus\Lcal^\boxplus_{12}$ ($i=1,2$).
The exchange process ends here. At this stage, robots are able to closely examine
every potential candidate, perform geometric verification, solve the sensor registration and data
association problems, and establish relative measurements; see, e.g.,
\cite{montijano2013distributed,leonardos2017distributed,indelman2014multi,mur2015orb}.

\subsection{Exchange Inertia and Dynamic Pricing} 
In Problem~\ref{prob:osep}, vertex weights quantify quantities such as the size of a
scan, computational cost of sensor registration for the corresponding potential
loop closures, and the desired workload balance.
From a broader perspective, the weights can be interpreted
as the \emph{exchange inertia}, such that a smaller weight signifies more
desire to share the associated scan with other robots, and vice versa.
This broader interpretation allows us to incorporate a wider spectrum of
objectives and constraints using the same underlying framework.
In particular, robots and/or the broker may utilize a dynamic pricing strategy
driven by various internal/external incentives.
For example, these dynamic pricing schemes may depend on the specific role of a
robot in the team, its capabilities, clearance
level, privacy restrictions, and the available mission-critical resources.

\begin{algorithm}[t]
  \caption{Optimal Data Exchange}\label{alg:OSEP}
  \begin{algorithmic}[1]
	\State \textcolor{green!40!black}{Robots}: Send the essential metadata to the broker
	\State \textcolor{blue!40!black}{Broker}: Form $\Gex$  (w/ dynamic pricing)
	\State \textcolor{blue!40!black}{Broker}: Form and solve ODEP via LP relaxation
	\State \textcolor{green!40!black}{Robots}: Execute $\pi^\star$ --- exchange scans
	\State \textcolor{green!40!black}{Robots}: Search for loop closures in $\Lcal^\pi_1$ and $\Lcal^\pi_2$
	\State \textcolor{green!40!black}{Robots}: Exchange the discovered loop closures: $\Lcal^\boxplus_i\setminus\Lcal^\boxplus_{12}$
  \end{algorithmic}
\end{algorithm}
\section{Experiments}
\label{sec:results}
Algorithm~\ref{alg:OSEP} summarises the entire ODEP process.
This section presents results obtained using  the KITTI dataset 
\cite{Geiger2013IJRR} to formulate realistic ODEP instances. KITTI was 
chosen for its long, data-rich trajectories, and accurate ground truth.
ODEP instances are solved with the Gurobi LP
solver.\footnote{\url{http://www.gurobi.com}}
Solving ODEP takes about $0.41$ seconds in one of the largest exchange graphs encountered in our datasets
(with more than $2 \times 10^3$ vertices and $96 \times 10^{3}$ edges) on an Intel Core i7-6820HQ CPU operating at 2.70
GHz. The runtime in realistic settings and using DBoW2 with
$\alpha = 0.3$ (see Section~\ref{sec:dbow2}) is about $0.03$ seconds. Due to
space limitation, in this section we focus mainly on P$_2$.

\subsection{Trajectory Geometry Experiments}
In order to create instances of ODEP with the KITTI dataset, we chose sequences of 
the odometry benchmark that contained considerable amounts of self-intersection 
and re-tracing in their ground truth trajectory. Each sequence is divided into two parts corresponding to two distinct robots. For each pose in the trajectory, Oriented FAST and Rotated BRIEF (ORB) features \cite{rublee2011orb} 
exceeding a variable FAST detection threshold are extracted from the associated color 
camera image. Since this set of features can be used to detect and compute loop closures
between poses as part of a SLAM system \cite{mur2015orb}, the number of extracted 
features determines the vertex weight $w_s(v)$ for the pose at vertex $v$. In regions 
with greater environmental detail, a greater number of ORB features are extracted. The 
KITTI dataset's odometry ground truth is then used to form edges between nearby poses associated with each robot. This process results in an exchange graph $\Gex$ with weights 
$w_s(v)$ that depends on a number of parameters:
\begin{enumerate}
\item FAST threshold $k_{F}$ used to detect ORB features,
\item Data rate or measurement frequency $f$ (KITTI data is provided at 10 Hz),
\item Maximum distance $d_\text{max}$ between poses that are candidate matches (i.e. $ u \sim v$),
\item Minimum fraction $\eta$ of range limited camera field of view (FOV) between poses that are candidate matches.
\end{enumerate}
Varying these parameters leads to different structures in $\Gex$ and variable communication savings
when using ODEP. In practice, different sensors and varying confidence in robot trajectory
estimates would permit empirical modelling of exchange graph formation. In this paper, we 
analyze ranges of the above parameters to capture a variety of problem instances. For example,
large values of $d_\text{max}$ correspond to scenarios where each robot's trajectory estimate is 
highly uncertain and, therefore, a greater range of nearby poses need to be considered
loop-closure candidates. Figure \ref{fig:kitti_trajectory} displays edges of $\Lcal$ in green for 
a particular set of parameters on KITTI odometry sequence 0 and sequence 6. Figure \ref{fig:radius_seq0} and \ref{fig:radius_seq6} display the 
communication savings of the optimal policy relative to monolog policies for sequences 0 and 6 when $\Lcal$ is formed between poses within a variable 
$d_\text{max}$. Figures \ref{fig:fov_seq0} and \ref{fig:fov_seq6} report similar
results when $\Lcal$ is formed with a variable minimum FOV overlap $\eta$. The abrupt jumps in cost seen in Figures 
\ref{fig:radius_seq6} and \ref{fig:fov_seq6} are caused by sequence 6's particular trajectory. Figure 
\ref{fig:kitti_trajectory_06} displays the simple elongated loop that sequence 6 follows, along with some 
candidate edges formed by the field of view threshold of $\eta$ = 0.4. These settings lead to candidate edges 
across the thin loop which vanish for shorter values of $d_{\text{max}}$ and higher values of $\eta$,
reducing the required communication cost. 

Figure \ref{fig:kitti_params} demonstrates that solving ODEP enables the robots to reduce the amount of data to 
be exchanged by up to 5 MB over some monologs. Note that in some of the
state-of-the-art systems, full bidirectional communication 
of measurements is utilized by default, resulting in at least the sum of the communication costs of both 
monologs (red and green curves) in Figures \ref{fig:radius_seq0}-\ref{fig:dbow_seq6} 
\cite{dong2015distributed}. For a typical 11 Mb/s ad hoc WiFi network
tested in our laboratory, 5 MB corresponds to approximately 5 seconds of
transmission time. Thus, in addition to reducing use of network bandwidth and
battery usage, communication reduction could potentially help to significantly shorten robot rendezvous 
periods in time-critical missions.
 
\begin{figure}[t!]
\captionsetup[subfigure]{justification=centering}
\begin{subfigure}{0.5\textwidth}
  \centering
  \includegraphics[scale=0.205]{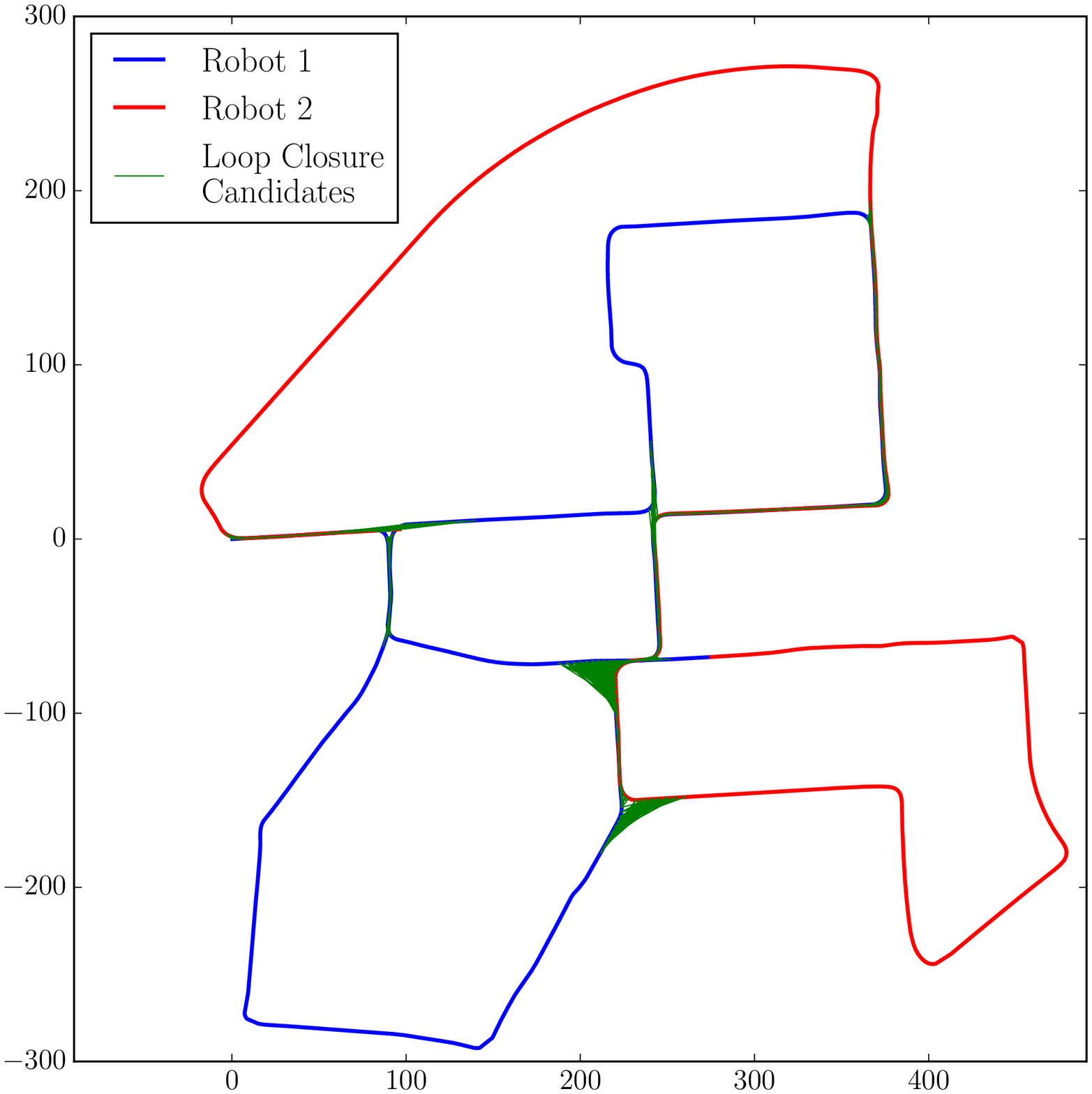}
  \caption{\small Sequence 0}
  \label{fig:kitti_trajectory_00}
\end{subfigure}
\\[0.4cm]
\begin{subfigure}{0.5\textwidth}
  \centering
  \includegraphics[scale=0.33]{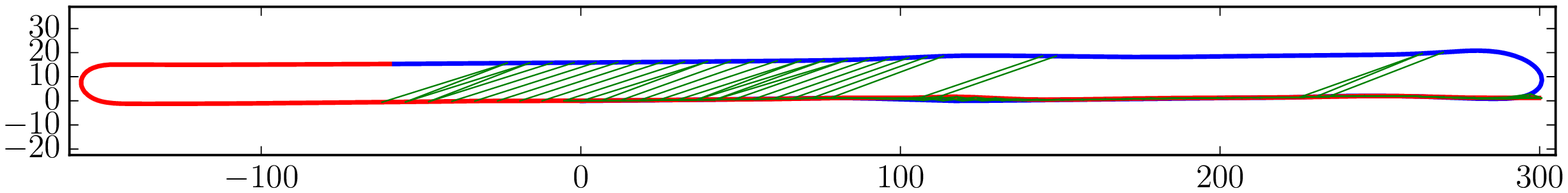}
  \caption{\small Sequence 6}
  \label{fig:kitti_trajectory_06}
\end{subfigure}
  
  \caption{\small Ground truth for KITTI odometry dataset sequences 0 and 6 with parameters $f$ = 2 Hz,  $\eta=0.4$, $d_\text{max} = 30\text{m}$ between robot 1 (blue) and robot 2's (red) for edges (green). The edges  and weights formed with ORB feature counts produce the exchange graph $\Gex$.}    
  \label{fig:kitti_trajectory}
  \vspace{0.2cm}
\end{figure}

\begin{figure*}[t]
\begin{subfigure}[t]{0.32\textwidth}
  \centering
  \includegraphics[width=\textwidth]{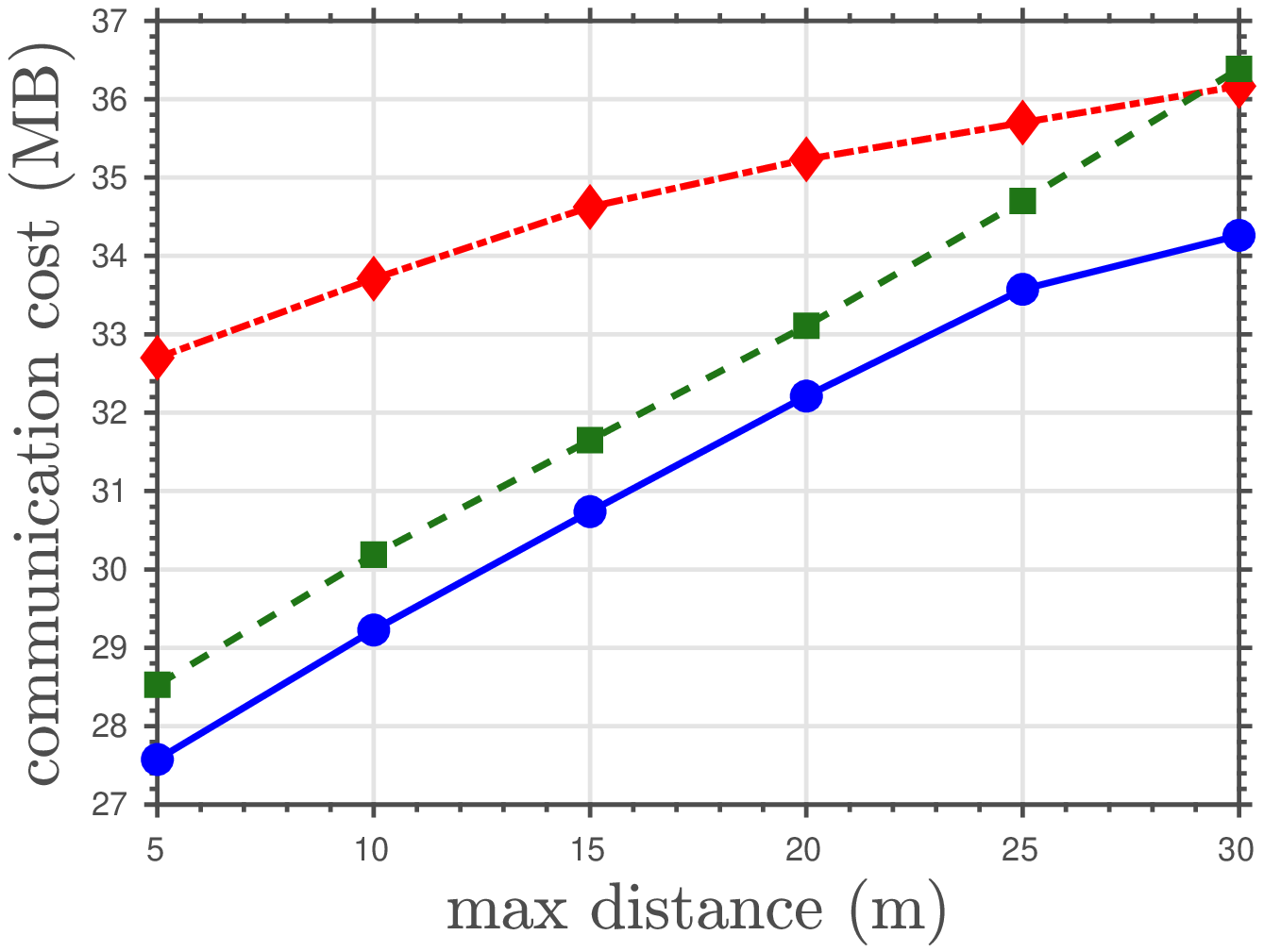}
  \caption{\small Sequence 0, $d_{\text{max}}$ threshold}
  \label{fig:radius_seq0}
\end{subfigure}
~
\begin{subfigure}[t]{0.32\textwidth}
  \centering
  \includegraphics[width=\textwidth]{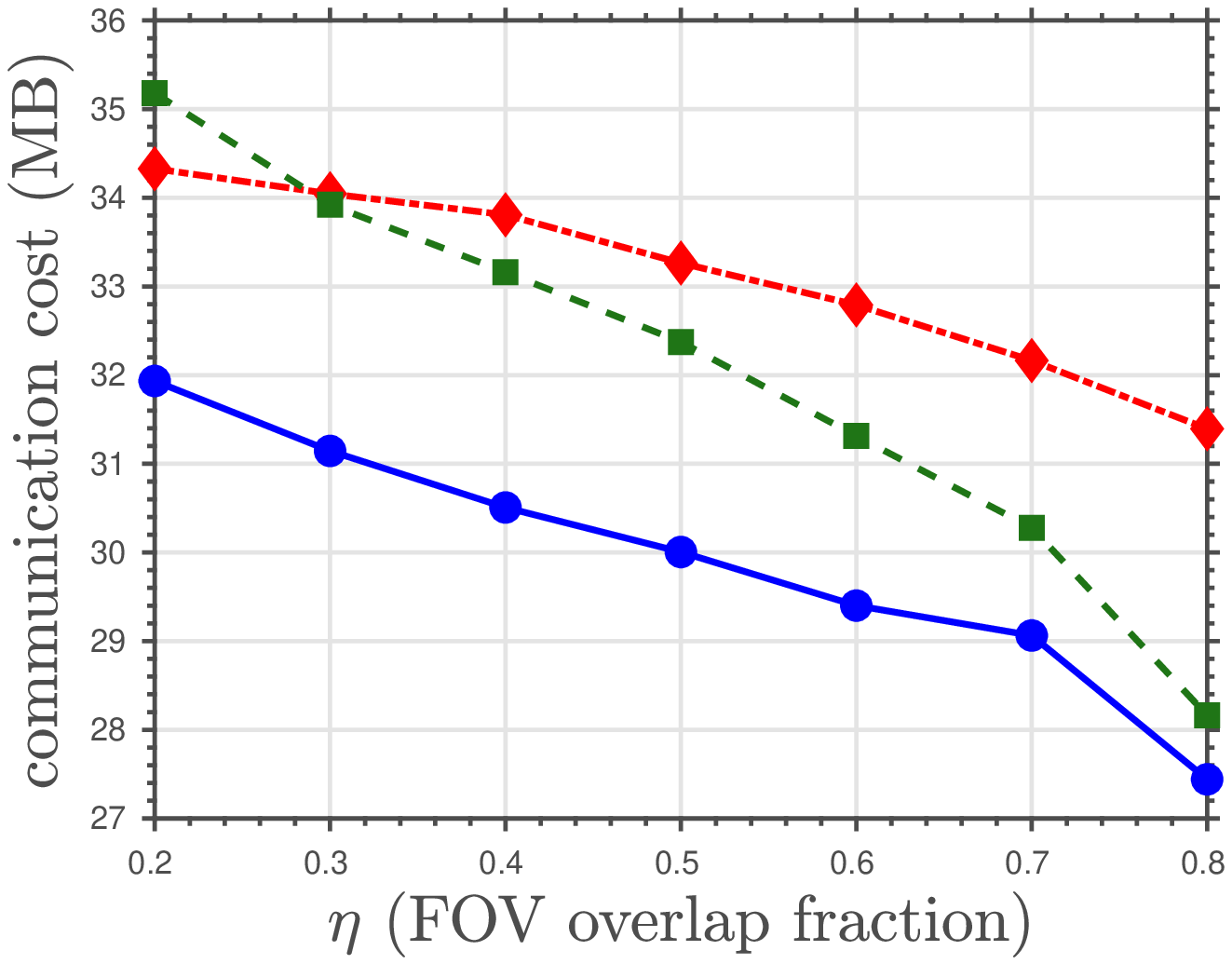}
  \caption{\small Sequence 0, $\eta$ threshold}
  \label{fig:fov_seq0}
\end{subfigure}
~
\begin{subfigure}[t]{0.32\textwidth}
  \centering
  \includegraphics[width=\textwidth]{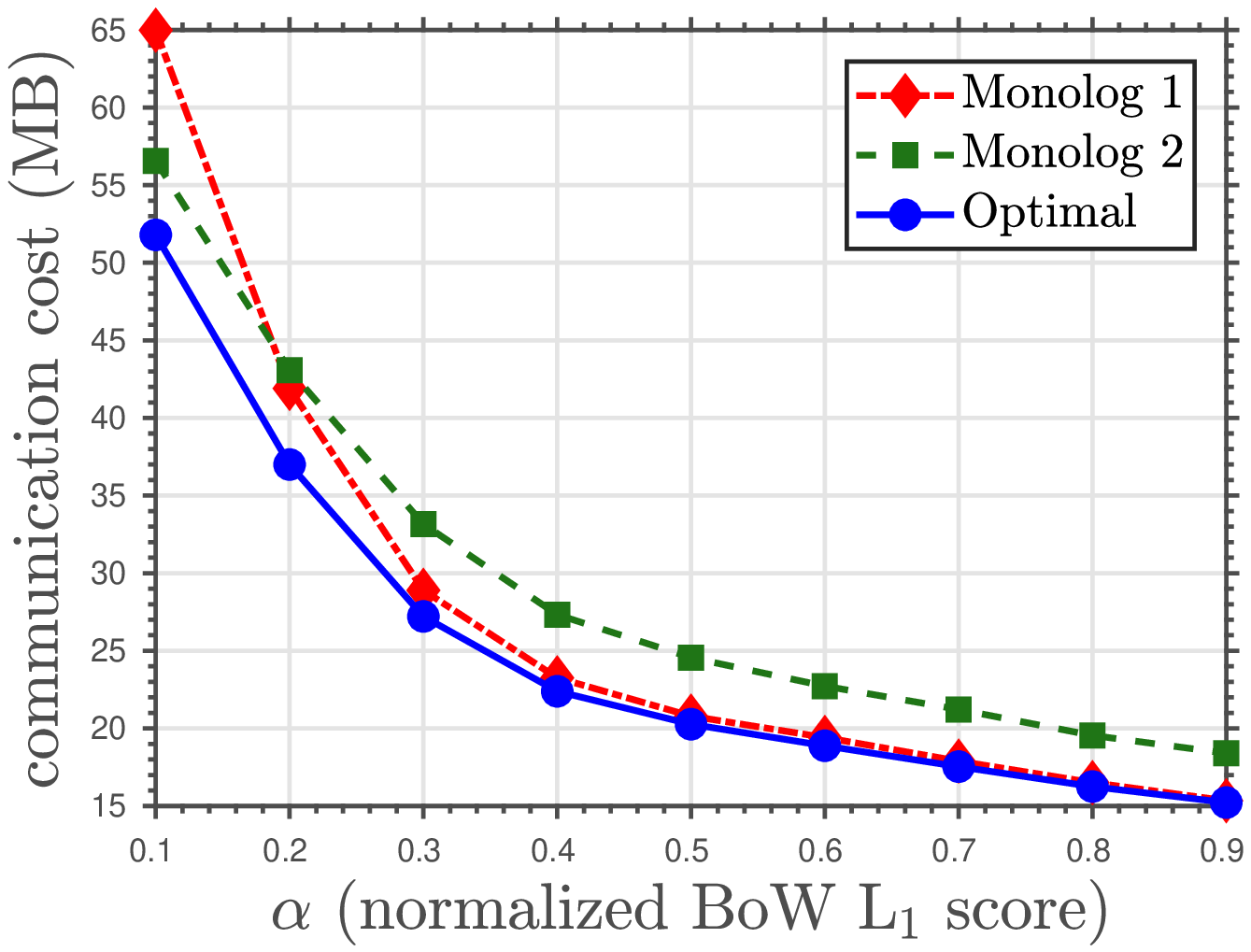}
  \caption{\small Sequence 0, $\alpha$ threshold}
  \label{fig:dbow_seq0}
\end{subfigure}
\\[0.4cm]
\begin{subfigure}[t]{0.32\textwidth}
  \centering
  \includegraphics[width=\textwidth]{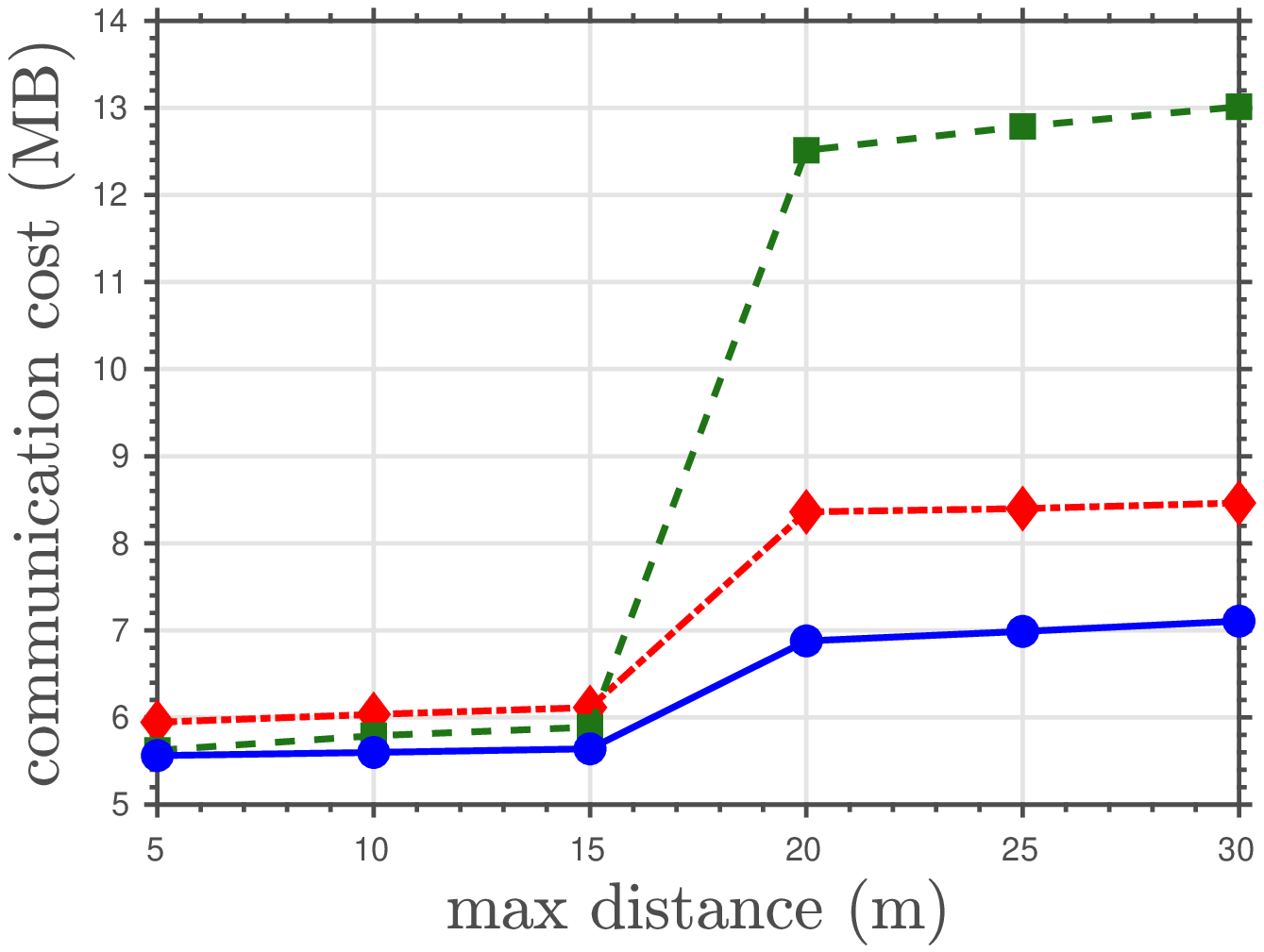}
  \caption{\small Sequence 6, $d_{\text{max}}$ threshold}
  \label{fig:radius_seq6}
\end{subfigure}
~
\begin{subfigure}[t]{0.32\textwidth}
  \centering
  \includegraphics[width=\textwidth]{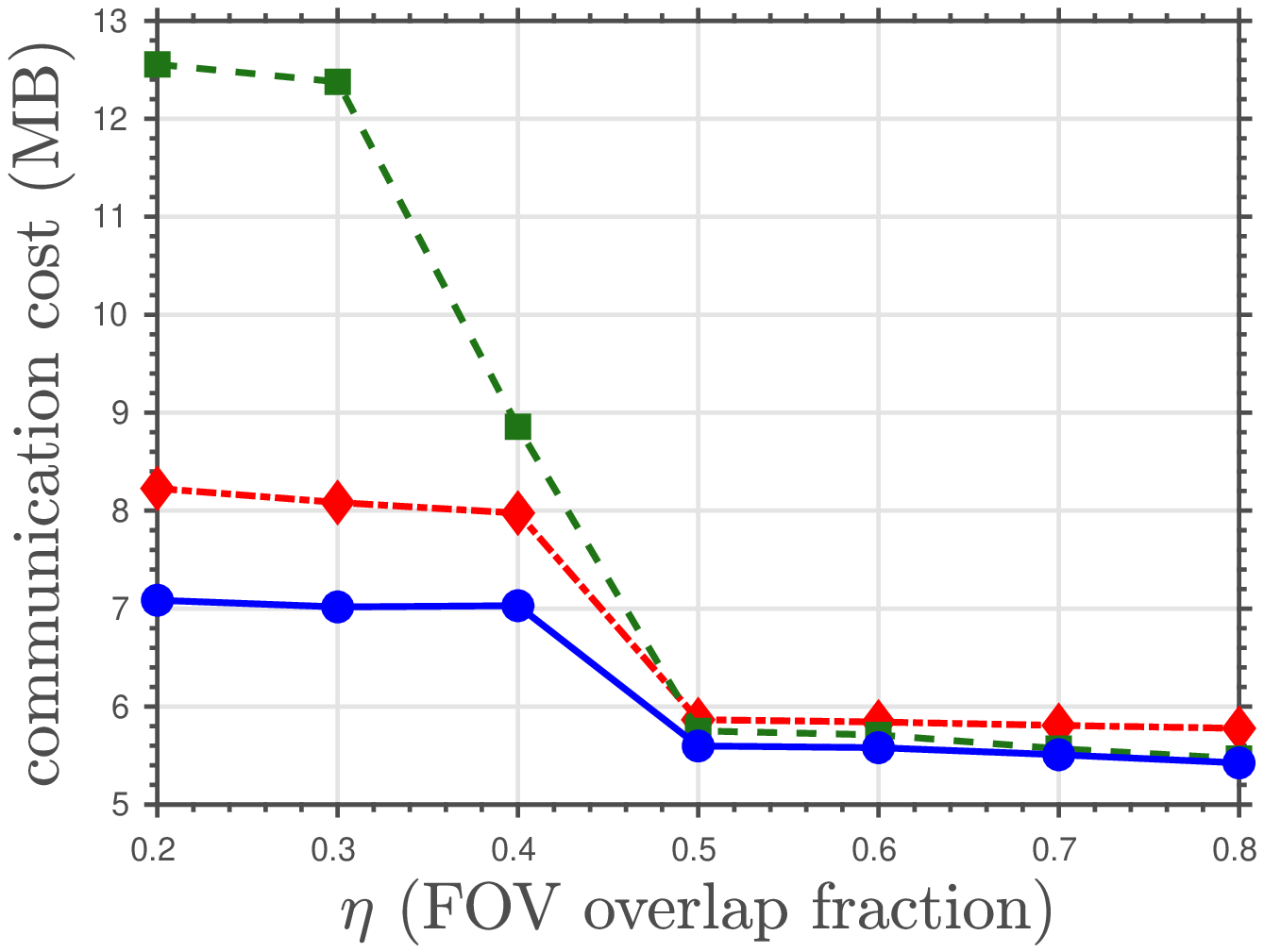}
  \caption{\small Sequence 6, $\eta$ threshold}
  \label{fig:fov_seq6}
\end{subfigure}
~
\begin{subfigure}[t]{0.32\textwidth}
  \centering
  \includegraphics[width=\textwidth]{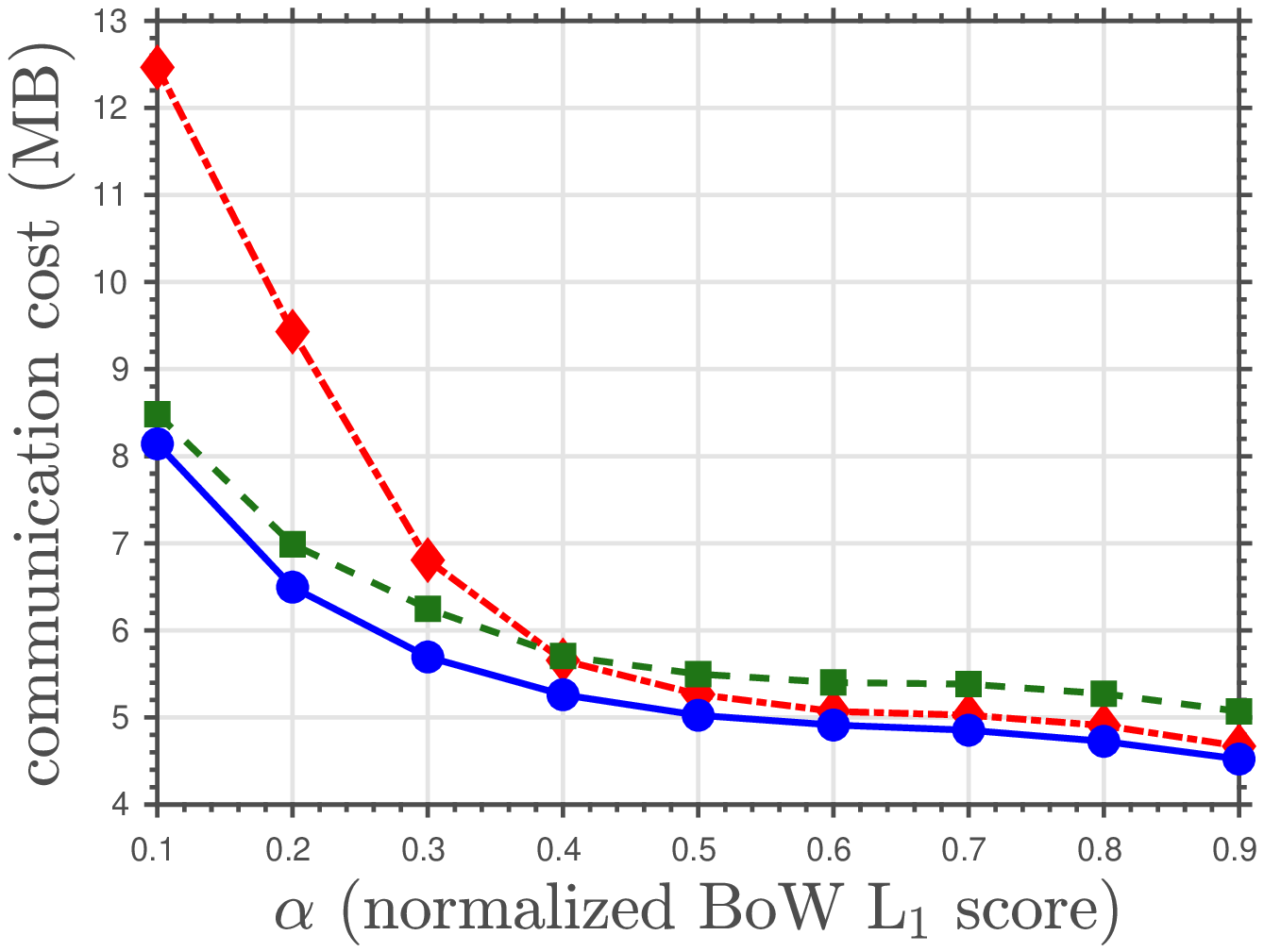}
  \caption{\small Sequence 6, $\alpha$ threshold}
  \label{fig:dbow_seq6}
\end{subfigure}
  \caption{\small \small Communication cost for KITTI odometry sequences 0 and 6
	with $k_F$ = 100, $f$ = 10 Hz, and varying $d_\text{max}$, $\eta$, or $\alpha$ (x-axes). Figures \ref{fig:radius_seq0} and \ref{fig:radius_seq6} form the candidate edge graph $\Lcal$ using maximum Euclidean distance $d_{\text{max}}$ between poses from KITTI groundtruth data, whereas Figures \ref{fig:fov_seq0} and \ref{fig:fov_seq6} use the fraction of overlapping fields  of view $\eta$ to determine candidates. Figures \ref{fig:dbow_seq0} and \ref{fig:dbow_seq6} form the candidate graph using the candidates with the 2 best DBoW2 scores greater than threshold $\alpha$.}
  \label{fig:kitti_params}
\end{figure*}

\subsection{Place Recognition Experiments}
\label{sec:dbow2}
An appearance-based place recognition system like DBoW2 \cite{GalvezTRO12} can
also be used for inter-robot loop closure detection and to 
generate the candidate edge set $\Lcal$. In a situation where robots do not have an accurate 
estimate of the transformation between their trajectories' frames of reference, place recognition 
must be leveraged instead of viewpoint proximity to find potential loop closures. To facilitate 
place recognition, DBoW2 only needs the ``word'' in the bag-of-words vocabulary describing each 
ORB feature~\cite{GalvezTRO12}. This word can typically be described in 3 or fewer bytes, which is less than one 
tenth of the size the standard 32 byte BRIEF descriptor used in ORB. Thus, an inexpensive exchange 
of vocabulary vectors (i.e., metadata) allows robots to search for
promising candidates and form an exchange graph $\Gex$ for ODEP.

In our experiments, we trained a DBoW2 vocabulary with parameters $k_w =10$ and
$L_w=6$ on ORB
features from 5 KITTI odometry benchmark
sequences. The two candidate edges with the highest normalized score exceeding
threshold $\alpha$ \cite{GalvezTRO12} were used to form candidate edges in
$\Lcal$. Communication savings from ODEP 
instances produced with KITTI odometry sequences 0 and 6 are displayed in Figures \ref{fig:dbow_seq0} 
and \ref{fig:dbow_seq6}. Although the structure of exchange graphs resulting from appearance based 
methods were very different from the geometric methods of the previous section, the cost of communication can still be significantly reduced using our method. For low $\alpha$ thresholds, the optimal policy affords 
significant savings of around 5-10 MB over the monolog policies in sequence 0 (Figure \ref{fig:dbow_seq0}). 
In the smaller sequence 6, the net communication savings are smaller because there are fewer candidates, 
but the optimal policy is still almost 10\% more efficient than the best monolog policies at $\alpha$ 
thresholds in the range of 0.2 to 0.4. It is also worth noting that relying on a single communication 
direction (i.e. using only one fixed monolog policy) throughout a mission is a poor communication heuristic 
that could produce arbitrarily bad and inconsistent results. This is illustrated in Figures \ref{fig:dbow_seq6} 
and \ref{fig:fov_seq0} where depending on the value of $\alpha$ or $\eta$, the better choice of monolog and its 
performance penalty relative to the optimal policy changes drastically. 

\section{Conclusion and Future Work}
\label{sec:conclusion}
Given the limitations of onboard resources, it is essential that robots communicate wisely.
State-of-the-art techniques often have to sacrifice content by down-sampling
the exchanged data, e.g.,
\cite{dong2015distributed,indelman2014multi,indelman2016incremental,cieslewski2017efficient}. This comes at
the risk of losing potential valuable inter-robot loop closures which are the
essence of cooperative localization and mapping.
This paper addressed this challenge by investigating the logistical aspect of sensory data sharing
in distributed CSLAM front-ends. First, we formalized
the optimal data exchange problem that encompasses a wide range of sensing
modalities (e.g., vision, 2D and 3D lasers). This led to a resource-efficient
and provably lossless (i.e., ensuring a complete search) 
communication planning framework. The proposed
framework takes into account both the quantity of exchanged data, and the
resulting division of labor induced by the executed exchange policy. This allows
us to design efficient communication plans while distributing the induced
workload based on, for example, the distribution of computational resources among
robots. Additionally, ODEP can seamlessly incorporate privacy and
security constraints through the concept of exchange inertia and dynamic
pricing schemes.
Our approach benefits greatly from several fundamental results in graph
theory and combinatorial optimization. In particular, these results lead to a
fast and provably tight LP relaxation scheme to find the globally optimal exchange
polices. In addition, our theoretical analysis characterized the necessary and sufficient
conditions under which simpler unidirectional exchange policies are optimal.
Finally, we experimentally validated geometric and appearance-based realizations of the 
proposed framework using the KITTI odometry benchmark datasets.

In retrospect, several crucial insights played major roles in the success of our
approach. First and foremost, identifying plausible inter-robot loop-closure
candidates \emph{before} transmitting the bulk of sensory data is what makes
communication planning possible. Forming the exchange graph and exploiting its
unique structure (topology and the vertex/edge weights) allowed us to identify
more efficient, yet lossless, exchange policies---often emerging as natural
dialogs. Although this requires exchanging ``metadata'', the incurred cost is
often not comparable to the that of the actual data exchange. For example, 
visual place recognition systems like DBoW2 form loop closure candidates 
with sparse feature vectors that use an order of magnitude less data than the 
full descriptors used for subsequent loop closure verification. 
In our experiments where robots found candidate edges by exchanging pose graphs,
poses are described by $\mathrm{SE}(2)$ or $\mathrm{SE}(3)$ objects that are
much smaller than hundreds of visual descriptors.  Furthermore, we exploited the
sparsity pattern of the graph in our implementation to solve the resulting LP
even faster.

This paper provides a solid foundation for optimal communication planning in
distributed CSLAM front-ends. Our approach is able to find the optimal
exchange policy between a pair of robots during pairwise encounters. $n$-way ($n > 2$) scan
exchange problems naturally arise in robotic networks with denser communication
graphs.  Although the proposed approach can
still be used in these cases, it may not necessarily lead to the optimal
strategy.  Addressing the sensory data exchange between more than two
robots requires exploring new mechanisms such as data caching and routing. The optimal
$n$-way data exchange problem is our next challenge.

\section*{Acknowledgement}
This work was supported in part by the NASA Convergent Aeronautics
Solutions project Design Environment for Novel Vertical Lift Vehicles
(DELIVER) and by the Northrop Grumman Corporation.
The authors would like to thank Sergii Iglin for the Graph Theory
Toolbox\footnote{\url{https://goo.gl/XJDwUf}},
the team behind the KITTI dataset \cite{Geiger2013IJRR}, and Noam Buckman of LIDS
for assistance with experiments. 

\bibliographystyle{IEEEtran} 
\bibliography{scanexchange} 
\end{document}